\newif\ifarxiv
\arxivfalse

\documentclass[10pt,twocolumn,letterpaper]{article}

\usepackage[pagenumbers]{cvpr} 

\usepackage{xurl}   

\usepackage{xurl}

\usepackage{amsmath,amsfonts,amsthm,amssymb}
\usepackage{mathtools}
\usepackage{bm}
\usepackage{nicefrac}
\usepackage{microtype}
\usepackage{lipsum}

\usepackage{color,xcolor}
\usepackage{epsfig}
\usepackage{graphicx}

\usepackage{adjustbox}
\usepackage{array}
\usepackage{booktabs}
\usepackage{colortbl}
\usepackage{wrapfig}
\usepackage{hhline}
\usepackage{multirow}
\usepackage{subcaption}
\usepackage{caption}
\usepackage{float}

\usepackage{changepage}
\usepackage{extramarks}
\usepackage{fancyhdr}
\usepackage{lastpage}
\usepackage{setspace}
\usepackage{soul}
\usepackage{xspace}
\usepackage{diagbox}
\usepackage{url}

\usepackage{algorithm}
\usepackage{algpseudocode}
\usepackage{enumerate}
\usepackage{enumitem}  
\usepackage{makecell}
\usepackage{pifont}
\usepackage{titlecaps}
\ifarxiv\else
\usepackage[accsupp]{axessibility}
\fi
\usepackage{framed}

\usepackage{epigraph}

\usepackage{CJKutf8}

\usepackage{float}

\newcommand{\model}{ShadowDancer\xspace}
\newcommand{\shadow}{Shadow\xspace}
\newcommand{\shadowlibrary}{Shadow Library\xspace}

\renewcommand{\paragraph}[1]{\vspace{0.1cm}\noindent\textbf{#1}}

\definecolor{MyDarkBlue}{rgb}{0,0.08,1}
\definecolor{MyAqua}{rgb}{0,0.7,0.7}
\definecolor{MyDarkGreen}{rgb}{0.02,0.6,0.02}
\definecolor{MyDarkRed}{rgb}{0.8,0.02,0.02}
\definecolor{MyDarkOrange}{rgb}{0.40,0.2,0.02}
\definecolor{MyPurple}{RGB}{111,0,255}
\definecolor{MyRed}{rgb}{1.0,0.0,0.0}
\definecolor{MyGold}{rgb}{0.75,0.6,0.12}
\definecolor{MyDarkgray}{rgb}{0.66, 0.66, 0.66}
\definecolor{Cardinal}{rgb}{0.549,0.082,0.082}

\newif\ifdrafting
\draftingtrue 
\ifdrafting
    \newcommand{\jc}[1]{\textcolor{MyDarkGreen}{\begin{CJK}{UTF8}{gbsn}[Jin: #1]\end{CJK}}}
    \newcommand{\cl}[1]{\textcolor{MyDarkBlue}{\begin{CJK}{UTF8}{gbsn}[Claude: #1]\end{CJK}}}
    
\else
    \newcommand{\jc}[1]{}
    \newcommand{\cl}[1]{}
    
\fi

\definecolor{cvprblue}{rgb}{0.21,0.49,0.74}
\usepackage[pagebackref,breaklinks,colorlinks,allcolors=cvprblue]{hyperref}

\def\paperID{*****} 
\def\confName{CVPR}
\def\confYear{2026}

\title{\makebox[0pt][r]{\raisebox{\dimexpr .5\fontcharht\font`S-.5\height\relax}{\includegraphics[scale=0.031]{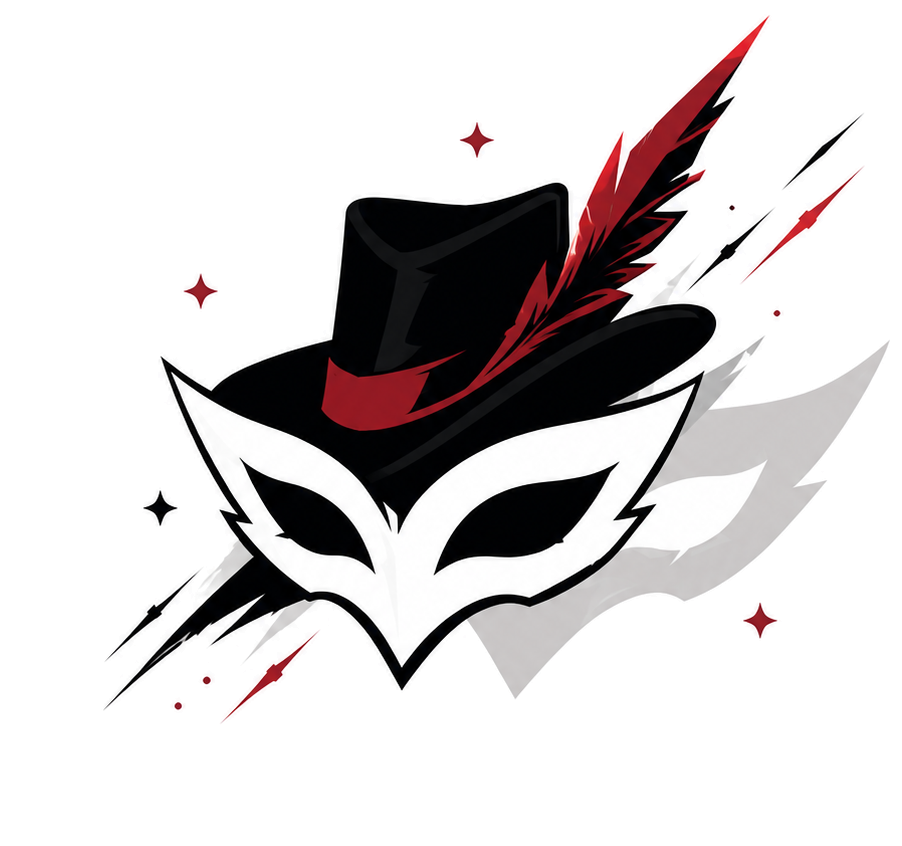}}\,}ShadowDancer: Teaching Video World Models Any Action by Learning\\ Unified Dynamics Representations from a Video and Its Shadow}

\author{
Jin Cao$^{1}$ \quad Zian Meng$^{1,2}$ \quad Kaipeng Zhang$^{1\dagger}$\\
$^{1}$Alaya Lab \qquad $^{2}$Shanghai Innovation Institute\\
{\url{https://ShadowDancer-1.github.io}}
}

\begin{document}
\twocolumn[
\maketitle
\vspace{-9mm}
\begin{center}
    \includegraphics[trim={0px 0px 0px 0px}, clip, width=\linewidth]{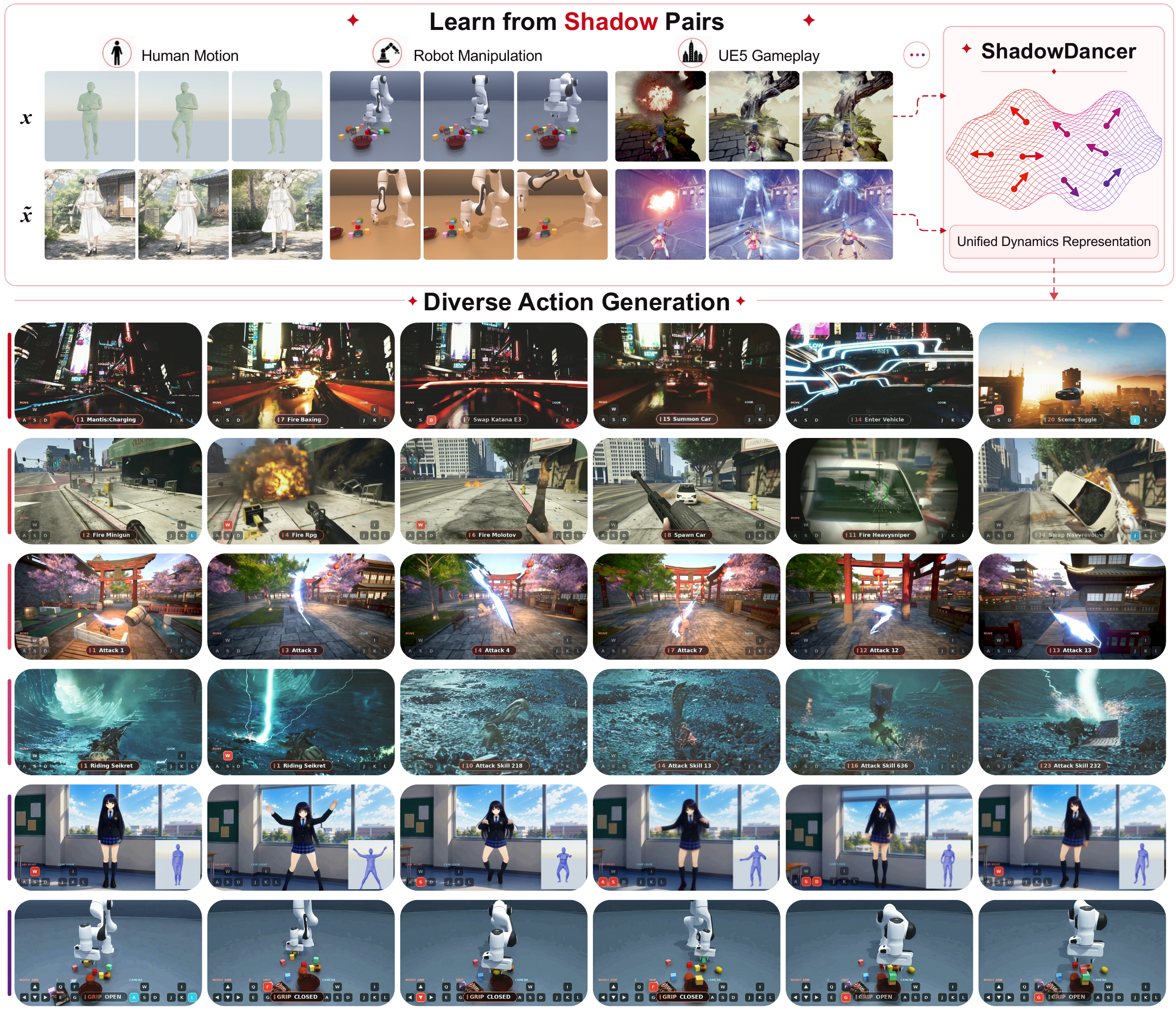}
\end{center}
\vspace{-0.4cm}
\captionof{figure}{\textbf{ShadowDancer learns any action from a video and its shadow.}
Top: shadow pairs $(x, \tilde{x})$ replay one dynamics under independently resampled appearance, across heterogeneous sources such as human motion, robot manipulation, and open-world gameplay, and distill it into a unified dynamics representation $z_{1:T-1}$.
Bottom: the same latent interface drives diverse commanded actions, spanning first- and third-person combat, driving, locomotion, and robot manipulation.}
\label{fig:teaser}

\bigbreak
]
{\renewcommand{\thefootnote}{}\footnotetext{$^{\dagger}$Corresponding author.}}
\begin{abstract}
We present \model, a novel approach to any-action, frame-level control of interactive video world models.
The obstacle is representational: existing interfaces either encode an action loosely, leaving how it unfolds for the model to improvise, or encode it exactly through structured signals that serve one family and are hard to acquire, so precise control across diverse dynamics remains impractical.
Demonstration videos are the natural remedy, specifying any dynamics frame by frame; yet a video shows its dynamics only through one particular appearance, a single \emph{shadow} of the underlying dynamics, so actions learned from demonstrations transfer poorly to new scenes.
\model addresses this with two key innovations: (1) \emph{shadow pairs}, video pairs that replay the same dynamics under independently resampled appearance, constructed at scale by our \shadowlibrary, so that a dynamics family becomes controllable exactly when such pairs can be constructed for it; and (2) \emph{cross-shadow prediction}, which learns actions by predicting one shadow from the other, so that whatever the pairing resamples is discarded by construction and whatever it preserves becomes the action, yielding a unified dynamics representation that drives a block-causal world model.
Any demonstrated clip thus becomes a reusable action asset, replayed in new environments without action labels, motion estimators, or fine-tuning.
Experiments demonstrate improved action transfer and long action rollout over strong latent-action and interactive world model baselines across diverse dynamics families, with an average blinded win rate of $86\%$ in rollout comparisons.
We show video results at \url{https://ShadowDancer-1.github.io}.

\end{abstract}
\section{Introduction}
\label{sec:intro}

Interactive world models have emerged as a transformative capability in video generation, synthesizing persistent virtual worlds that users can explore in real time~\cite{genie3_deepmind_2025,he2025matrix,sun2025worldplay,mao2025yume,oasis2024,team2026advancing,gao2026infinite}.
However, despite the boundless variety of dynamics a world can host, existing approaches lack a unified means of specifying how those dynamics should unfold.
The result is a stubborn trade-off between precision and generality: a behavior can be specified loosely for arbitrary dynamics, or exactly for one narrow family, but rarely both at once.
This leaves worlds that can be explored but not yet directed, restricting their applicability for interactive entertainment and world simulation.
We therefore target \emph{any-action, frame-level control}: precision, so that a command specifies how an action unfolds frame by frame, and generality, so that the same interface extends to any dynamics family.

The bottleneck is not generative capacity, as video diffusion backbones already synthesize complex articulated motion well~\cite{videoworldsimulators2024,wan2025wan,chen2025skyreels,kong2024hunyuanvideo}; it lies in the \emph{representation} of actions: how the user communicates which dynamics should unfold.
Any-action, frame-level control is at its core a representation learning problem, and each existing interface can be read as one candidate representation.
Symbolic commands~\cite{he2025matrix,yu2025gamefactory,oasis2024} were designed around logged interactions such as keystrokes: they name an event from a finite vocabulary and leave its trajectory for the generator to improvise, so holding a key longer repeats an action yet hardly reshapes it.
Free-form text~\cite{zhu2026incantation,mao2025yume} removes the vocabulary but not the ambiguity, since language describes dynamics rather than supplying it.
Structured motion inputs such as 3D human motion~\cite{cao2025uni3c,fang2026threedimo}, point tracks~\cite{lee2026generative,shin2026motionstream}, and camera pose trajectories do reach frame-level precision, but each format serves a single family, demands its own conditioning machinery, and depends on specialized yet often brittle estimators, so the precise signal it promises is hard to obtain in practice.
What remains is the demonstration itself, a natural specification that is temporally dense and format-free, and the way humans are taught new actions.
Latent action models already exploit it, distilling the transitions of a reference clip into continuous latent actions $z$ that condition generation~\cite{edwards2019imitating,rybkin2018learning,bruce2024genie,chen2024igor,ye2024latent}.
But critically, such models are trained to reconstruct the very clip they read, and self-reconstruction fundamentally conflicts with \emph{control}: reconstruction asks $z$ merely to explain the observed appearance, whereas control asks it to transfer the underlying dynamics to new appearances~\cite{yang2025learning,garrido2026learninglatentactionworld,locatello2019challenging,khemakhem2020variational,jiang2026olaf}.
Trained without ever observing the same dynamics under two different appearances, the latent cannot tell which traits belong to the action and which merely co-occur with it: it entangles what moves with how it looks indiscriminately, and shrinking it~\cite{gaoadaworld} or regularizing it toward semantic features~\cite{jiang2026olaf} operates within the same supervision: invariance can only be encouraged as a penalty, and the unchosen residue can resurface as spurious motion in the generation.
What the task calls for is to learn the representation from data in which dynamics is separated from appearance by construction.

\looseness=-1 To address this challenge, we propose \textbf{\model}, built on the idea of observing the same dynamics \emph{twice}.
Our approach introduces two key innovations.
(1)~The \emph{\shadowlibrary}: we realize observing-twice as \emph{\shadow pairs}, where a \shadow of a video is a second render of the same dynamics under a different appearance, frame-synchronized in motion while subject, scene, materials, and lighting are independently resampled.
Constructing shadows is a protocol rather than an engine: replay the dynamics and resample everything else.
The \shadowlibrary implements this protocol across animation suites, open-world games, and robotic simulators.
(2)~\emph{Cross-shadow prediction}: an encoder extracts latent actions from one video, and a decoder predicts the other shadow from those latents under the shadow's own appearance context.
Since the target appearance is already supplied, the prediction requires transferring dynamics rather than reconstructing appearance, making invariance a property of the supervision rather than a penalty.
The pairing defines the action itself: whatever the pair shares becomes the controllable factor.
What the representation couples to is thus no longer an accident of the data but a choice of the protocol.
Pairing on a body motion while resampling the camera yields camera-free control, whereas pairing on a camera trajectory across scenes yields pure camera control.
More generally, any dynamics family becomes controllable once a shadow pair can be constructed for it, with no vocabulary to design and no labels to collect.
The resulting latent stream is a unified dynamics representation, driving every action family through one interface.
It conditions a pretrained video diffusion backbone~\cite{chen2025skyreels,wan2025wan}, which we convert into a block-causal autoregressive generator~\cite{gu2025long,huang2025self} for interactive rollout.
Each demonstrated clip thereby becomes a \emph{variable-length action asset}, the reusable unit of control in \model.

In use, directing an action takes one demonstration: the dynamics of the clip is extracted in a single frozen-encoder pass, stored as an asset, composed with other assets along the condition stream, and replayed in a new environment.
Unlike symbolic interfaces, which specify what should happen and leave how it happens to the generator, \model supplies a dense control signal read from the trajectory itself, so the timing, amplitude, and style of an action are inherited from the demonstration.
We evaluate this interface in two settings: action transfer, which tests how precisely a demonstration from one environment is re-enacted in another, and long action rollout, which tests how well stored assets compose over time.
In summary, our contributions are threefold:
\begin{itemize}
    \item \looseness=-1 We introduce the \shadow learning paradigm, which learns, from a video and its \shadow, dynamics representations invariant to everything the pairing resamples, resolves latent-action non-identifiability by construction, and provides a formal analysis of when the shared dynamics is identifiable.
    \item \looseness=-1 We build the \shadowlibrary, a source-agnostic pairing protocol implemented through paired-rendering scripts across animation suites, open-world games, and robotic simulators, supplying shadow supervision for families including human, camera, and object dynamics without a single action label.
    \item We present \model, an any-action interactive world model driven by dense, variable-length action assets, and show improved action transfer and long action rollout over strong latent-action and interactive world model baselines across the evaluated dynamics families.
\end{itemize}

\begin{figure*}[t]
    \centering
    \includegraphics[width=\linewidth]{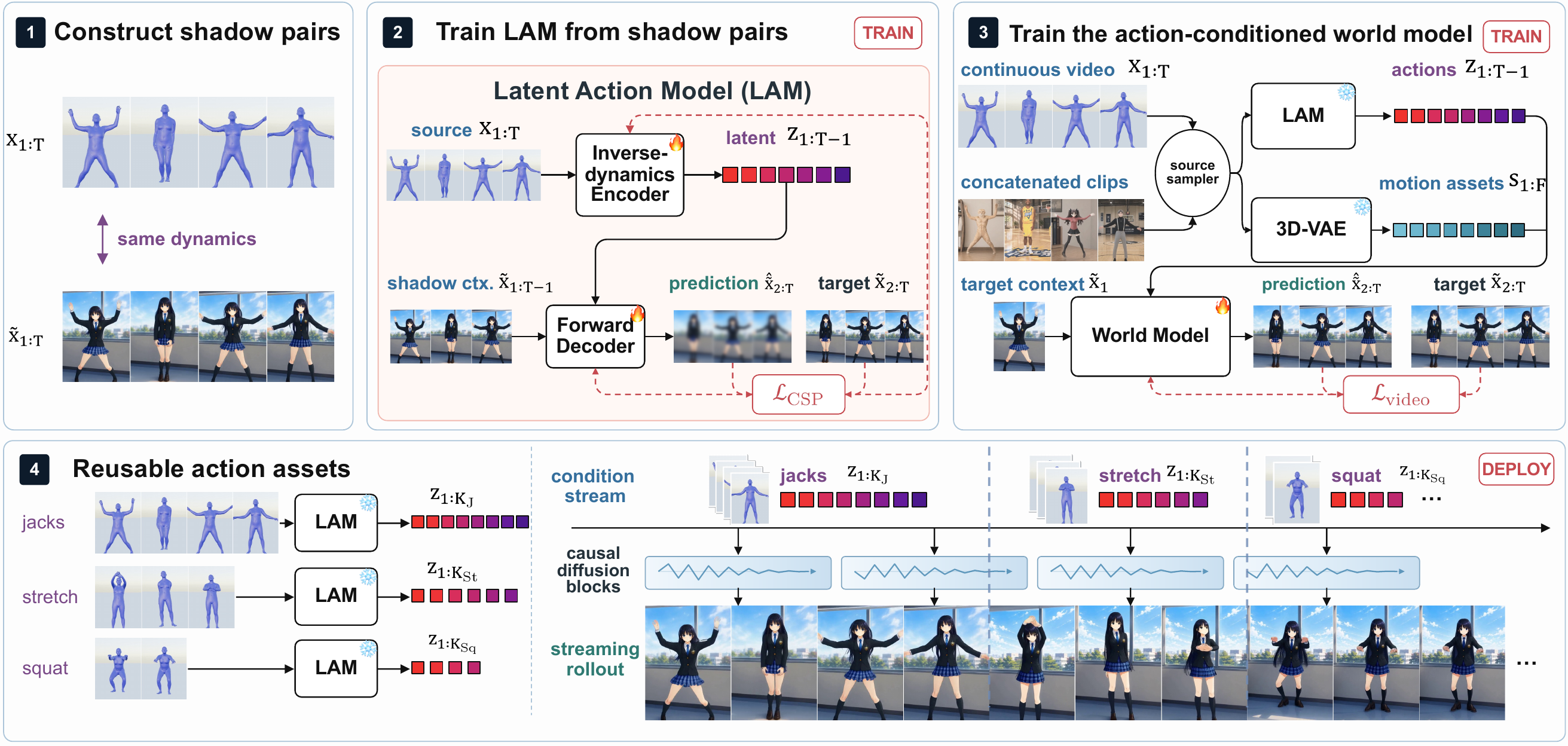}
    \caption{\textbf{Overview of \model}, taking human motion as the running example; the pipeline is identical for camera, object, and every other dynamics family.
    (1)~A shadow pair renders one dynamics $d$ twice, $x{=}R(d,c)$ and $\tilde{x}{=}R(d,\tilde{c})$, with the remaining factors independently resampled.
    (2)~Cross-shadow prediction trains the LAM: the encoder reads each $z_t$ from the source transition, and the decoder predicts the next shadow frame from $(\tilde{x}_t, z_t)$, so only what the pair shares survives in $z$.
    (3)~With the LAM and 3D-VAE frozen, a video diffusion backbone is fine-tuned into a block-causal world model, conditioned on the action trajectory $z$ and on source motion assets $s$, and trained on mixed continuous and concatenated sources.
    (4)~At deployment, variable-length action assets $(z, s)$ are stored, composed along the condition stream, and streamed through causal blocks; a new action costs one pass through the frozen encoders.}
    \label{fig:method_overview}
\end{figure*}

\section{Related Work}
\label{sec:related}

\paragraph{Interactive Video World Models.}
\looseness=-1 World models predict future observations and support planning or interactive simulation in games, robotics, and driving~\cite{ha2018world,hafner2023mastering,agarwal2025cosmos,gao2024vista}, and recent systems generate explorable worlds at impressive fidelity and horizon~\cite{genie3_deepmind_2025,oasis2024,he2025matrix,sun2025worldplay,mao2025yume,team2026advancing,gao2026infinite}.
Their control interfaces, however, are symbolic or textual: frame-level keyboard and mouse states logged from game engines~\cite{oasis2024,alonso2024diffusion,yu2025gamefactory,xiao2025worldmem}, or free-form text prompts~\cite{zhu2026incantation,mao2025yume}.
These commands specify what happens but not how (Sec.~\ref{sec:intro}), and telemetry-logged interfaces further bind each model to the action schema of its data-collection pipeline~\cite{he2025matrix,tang2025hunyuan,hong2025relic,ye2026mind}; structured motion inputs~\cite{cao2025uni3c,fang2026threedimo,lee2026generative,shin2026motionstream} are precise instead, but per-family and hard to acquire accurately.
\model instead conditions on a dense per-frame latent extracted from a reference clip, so that the speed, amplitude, and style of an action are supplied by the reference, and any dynamics family with constructible shadow pairs is controlled through one interface.

\paragraph{Latent Actions as Control Interfaces.}
\looseness=-1 Latent action models (LAMs) infer controls directly from unlabeled video, serving as interfaces for interactive generation~\cite{bruce2024genie,gaoadaworld,jang2025dreamgen}, cross-embodiment policy learning~\cite{chen2024igor,ye2024latent,bu2025univla,kim2025uniskill,chen2025villa,chen2025moto}, and world-model pretraining~\cite{edwards2019imitating,rybkin2018learning,wang2025co,garrido2026learninglatentactionworld}.
Unlike imitation learning, which aims to execute a demonstrated behavior on a target embodiment, our focus is the representation itself: a reusable dynamics signal, extracted from a demonstration, that transfers across appearances and environments.
The core difficulty is that inverse-dynamics latents couple control to whatever appearance happens to co-occur with it: clip-local reconstruction admits shortcut solutions through context cues~\cite{yang2025learning,garrido2026learninglatentactionworld} and is non-identifiable across contexts~\cite{locatello2019challenging,khemakhem2020variational}, so the same underlying dynamics need not map to a consistent latent~\cite{jiang2026olaf}.
Existing remedies constrain the latent (information bottlenecks, quantization)~\cite{bruce2024genie,gaoadaworld} or regularize it toward motion-centric references~\cite{jiang2026olaf}; in practice, regularization reduces but does not eliminate the entanglement.
All of these operate on a single observation of each transition, so the desired invariance is imposed as a penalty rather than exhibited by the data.
Shadow pairing changes the data instead: each dynamics is observed twice under independently resampled appearance, so invariance holds by construction and the objective merely has to read it off.

\paragraph{Invariance by Construction.}
Our supervision transplants a proven recipe from self-supervised learning: a representation is defined by the pairs it is trained on, since what a pair varies is discarded and what it preserves is kept.
Contrastive learning pairs two augmentations of one image, so the representation drops crops and color while retaining visual identity~\cite{chen2020simple}; CLIP pairs an image with its caption, so it retains the semantics that language can express~\cite{radford2021learning}; masked and predictive video objectives extend the same logic to spatiotemporal structure~\cite{tong2022videomae,assran2025v}.
In each case the objective does not create the invariance; it reads off the invariance that the pairing already wrote into the data.
Dynamics, however, resists pixel-space augmentation: a post-hoc transform of one video can hardly resample its appearance while preserving its motion frame by frame.
The shadow pair moves the augmentation into the renderer, replaying the trajectory and resampling everything else, which grounds cross-shadow prediction (Sec.~\ref{sec:method_lam}) as a sound way to recover the preserved dynamics, invariant to everything resampled.

\section{Method}
\label{sec:method}

\paragraph{Overview.}
Fig.~\ref{fig:method_overview} summarizes \model.
The method has three parts, in pipeline order: shadow pairs make dynamics identifiable in the data (Sec.~\ref{sec:method_formulation}); \emph{cross-shadow prediction} distills it into the unified representation $z$ (Sec.~\ref{sec:method_lam}); and a video diffusion backbone, conditioned on $z$ for control and on source assets for motion detail, becomes a block-causal interactive world model whose deployment we describe last (Sec.~\ref{sec:method_worldmodel}).
The \shadowlibrary (Sec.~\ref{sec:method_library}) implements the pairing protocol at scale.
Three objects recur throughout: the LAM extracts a latent trajectory $z$ invariant to everything its pairing resamples (\emph{appearance}, Sec.~\ref{sec:method_formulation}); a frozen 3D-VAE encodes the aligned source-detail stream $s$; and together $a = (z, s)$ is the reusable \emph{action asset} the world model consumes, while the \emph{unified dynamics representation} always denotes $z$ alone, unified in the sense of one encoder, one latent space, and one interface for every dynamics family.

\subsection{The Shadow Formulation}
\label{sec:method_formulation}

We model a video $x = (x_1, \dots, x_T)$ as the output of a rendering process
\begin{equation}
    x = R(d, c),
    \label{eq:render}
\end{equation}
where $d = (d_1, \dots, d_T)$ is a time-varying \emph{dynamics} trajectory, the factor that drives change between frames, and $c$ collects the remaining generative factors, such as subject identity, scene, materials, and lighting, which we call \emph{appearance}.
In this notation the metaphor of our title is literal: a video is a \emph{shadow} of its dynamics, one rendering of $d$ under one particular $c$.
Observing $x$ alone, $d$ and $c$ are entangled: many different $(d, c)$ pairs can explain the same pixels, which is the root cause of the shortcut and non-identifiability failures of latent action learning~\cite{yang2025learning,jiang2026olaf,locatello2019challenging}.

A \textbf{\shadow pair} comprises two independent renders of the same dynamics,
\begin{equation}
    \big(\,x = R(d, c),\; \tilde{x} = R(d, \tilde{c})\,\big),\;\; \tilde{c} \sim p(c),
    \label{eq:shadowpair}
\end{equation}
where the two videos are frame-synchronized: frame $t$ of $x$ and frame $t$ of $\tilde{x}$ share the identical $d_t$.
Both videos are shadows of the same $d$, and each is therefore the \shadow of the other: the relation is symmetric, and ``the \shadow of a video'' refers to a fellow projection of its dynamics rather than a projection of the video itself.
Because $\tilde{c}$ is resampled independently of $c$, the only factor deliberately preserved across the two rendering processes is $d$; source appearance cannot systematically predict target appearance.

\looseness=-1 The split between $d$ and $c$ in Eq.~\eqref{eq:render} is not fixed a priori; it is chosen by the pairing protocol.
Replaying a body motion while resampling the camera assigns the camera to $c$; replaying a camera trajectory while resampling the scene assigns it to $d$.
This operational definition is what makes the paradigm any-action: designating a factor as controllable requires only constructing pairs that preserve it.
Shadow pairing therefore differs from data augmentation: augmentation encourages invariance to nuisances chosen within a fixed task, whereas the pairing chooses the task variable itself, by deciding what is preserved.
\textbf{Accordingly, \emph{appearance} in this paper denotes the factors a pairing resamples, and \emph{dynamics} what it preserves.}
Dynamics need not exclude visual traits: a fire spell without its flames is no longer the same action, so the protocol keeps such traits in the representation by simply preserving them across the pair, and they transfer together with the motion.

\subsection{Cross-Shadow Prediction}
\label{sec:method_lam}

\paragraph{Objective.}
Cross-shadow prediction turns the pairing constraint into a learning objective.
We instantiate it with a latent action model (LAM): an inverse-dynamics encoder models each transition $(x_t, x_{t+1})$ with a variational posterior $q_\phi(z_t \mid x_t, x_{t+1})$ over latent actions $z_t \in \mathbb{R}^{d_z}$, $t = 1, \dots, T\!-\!1$, and a forward decoder $p_\theta$ performs next-frame prediction.
Standard LAMs~\cite{gaoadaworld,jiang2026olaf} read and predict the same clip.
Our decisive difference is the cross-shadow arrangement: the encoder reads the source transition, while the decoder predicts the other shadow, reconstructing $\tilde{x}_{t+1}$ from the target's own previous frame $\tilde{x}_t$ and the source-derived action $z_t$.
Training minimizes the $\beta$-VAE objective~\cite{higgins2017betavae,alemi2017deep}:
\begin{align}
\mathcal{L}^{\mathrm{CSP}}_{\theta,\phi}
=
\frac{1}{T\!-\!1}\sum_{t=1}^{T-1}\Big(
&-\mathbb{E}_{q_\phi(z_t\mid x_t, x_{t+1})}\big[\log p_\theta(\tilde{x}_{t+1}\mid \tilde{x}_t, z_t)\big]
\notag\\
&\quad+\beta\,\mathrm{KL}\!\left(q_\phi(z_t\mid x_t, x_{t+1})\,\|\,p(z_t)\right)
\Big),
\label{eq:lam}
\end{align}
where $p(z_t)$ is a fixed prior $\mathcal{N}(0, I)$.

\paragraph{Why this works.}
\looseness=-1 Source-context information in $z_t$ beyond what is needed to infer $d$ cannot improve population prediction of $\tilde{x}$, whose appearance is supplied by its own context $\tilde{x}_t$ and independently resampled from $c$: given the pairing, appearance ceases to help the prediction, and the bottleneck, pressured by the KL term, has room only for what does help, the shared dynamics.
Unlike feature-alignment regularizers~\cite{jiang2026olaf}, the predictive supervision itself supplies this cross-context constraint.
With probability $p$ we also let a video pair with itself ($\tilde{x} = x$), recovering the standard single-video LAM objective as a degenerate case; these self-pairs admit unpaired real video into the same training stream (Sec.~\ref{sec:method_library}) as an empirical realism augmentation, outside the guarantee below.

\paragraph{Formal guarantee.}
This preference for dynamics is not only an intuition; we prove it.
Write $D$ for the shared dynamics, $X$ for the source video, $(B, Y)$ for the target-side context and prediction target, and $K_d$ for the law of $Y$ given $B$ under dynamics $d$.
Whenever the pairing protocol delivers
\begin{equation}
\underbrace{D = h(X)}_{\text{observable}},\quad
\underbrace{X \perp\!\!\!\perp (B,Y)\mid D}_{\text{independent resampling}},\quad
\underbrace{K_d {=} K_{d'} \Rightarrow d {=} d'}_{\text{effect separation}},
\label{eq:conditions}
\end{equation}
the minimal representation sufficient for predicting one shadow from the other is the shared dynamics itself, up to an invertible reparameterization (Theorem~\ref{thm:shadow_identification}); formal statements and proofs are in Supplementary Sec.~\ref{app:shadow_guarantee}.

\paragraph{Factor-selective control.}
Dynamics itself is composite: the observer's own motion (ego, \ie the camera) and the motion of everything observed (scene) are superimposed in every video.
Pair construction separates them within one latent space and specifies the readout: pairs preserving only the camera trajectory supervise a \texttt{cam} head, pairs preserving only the scene dynamics supervise a \texttt{dyn} head, and pairs preserving both supervise a \texttt{full} head; what the \texttt{dyn} factor contains is itself defined by each family's pairing protocol, the arm trajectory alone in our robot manipulation pairs, the body motion in our human pairs.
At inference the same reference video can therefore be read as a pure camera action, a pure scene action, or their joint; the mechanism is detailed in Supplementary Sec.~\ref{app:factor_selective}.

\subsection{From Representation to World Model}
\label{sec:method_worldmodel}

\looseness=-1 The conditioning design follows two principles: (i) the action latent $z$ carries the unified, transferable control, and (ii) the source stream $s$ supplies the high-frequency motion detail that an invariance-constrained bottleneck is deliberately too small to carry.

\paragraph{Action-conditioned generation.}
We fine-tune a pretrained video diffusion transformer (DiT)~\cite{peebles2023scalable,chen2025skyreels,wan2025wan} on \shadowlibrary clips with flow matching~\cite{liu2023flow}: the frozen LAM's actions are fused into the timestep embedding for AdaLN modulation and attended through a dedicated cross-attention arm, while the source video is encoded by the 3D-VAE and injected as \emph{motion assets}, through channel concatenation at the input convolution and as additional cross-attention context.
Because source and target form a shadow pair, copying source appearance is not rewarded, so the generator is trained to read the source for motion detail rather than appearance; the cross-attention arm is zero-initialized, with details in Supplementary Sec.~\ref{app:conditioning}.

\paragraph{Block-causal conversion for interactive rollout.}
\looseness=-1 An interactive world model must generate forward-only over long horizons, while diffusion backbones are bidirectional and fixed-length.
Following recent causal conversions of video diffusion~\cite{gu2025long,huang2025self}, we fine-tune the bidirectional model into a block-causal generator: latent frames are grouped into blocks, attention is causal across blocks, and denoising is conditioned on the clean history of previous blocks.
At inference the model rolls out block by block with a key--value cache, consuming a stream of latent actions and emitting a stream of frames.

\paragraph{Actions as reusable assets.}
\looseness=-1 At inference, the model receives a first frame $\mathbf{I}_0$ and a demonstration, given as reference clips or stored action assets: the frozen encoder reads the demonstration into per-frame latent actions $z$, and the block-causal generator rolls out a video in which the depicted world re-enacts the demonstrated dynamics.
Because control is a latent trajectory rather than a per-frame key state, an ``action'' in \model is a \emph{variable-length dynamics segment}, stored as an action asset: extract $a = (z_{1:K}, s)$ from a clip of any duration, store it, replay it in any world, and compose segments by concatenation along the rollout, so teaching the model a new action takes a single pass through the frozen encoders, with no vocabulary design, labels, or fine-tuning.
Blocks group latent frames only for causal denoising, while the action stream stays per-frame, so a single action may span many blocks and its temporal detail is not tied to the block rate.
At deployment, discrete commands retrieve and concatenate short stored assets rather than supplying one continuous demonstration; we therefore expose the generator to the same discontinuous conditioning during training, replacing the continuous source, with some probability, by a concatenation of \emph{canonical action chunks} drawn from a fixed asset library, so a command at inference is a library lookup that matches the input statistics the generator was trained on.

\subsection{The \shadowlibrary}
\label{sec:method_library}

The shadow protocol, which replays the dynamics and resamples everything else, admits many implementations, and we deliberately build a library rather than a single engine.
Animation suites replay articulated human motion across characters, scenes, lighting, and cameras; open-world game environments replay trajectories across districts, weather, and time of day; robotic simulators replay manipulation across arms, objects, and tabletops; camera-trajectory sources replay the same path through different scenes.
All sources emit the same artifact, frame-synchronized clip sets that share one dynamics, so a new action family integrates by contributing one more shadow script.

Real-world video, which lacks exact shadows, enters the very same stream as degenerate self-pairs ($\tilde{x} = x$): it contributes visual diversity and realism priors, while the invariance pressure is supplied by the synthetic pairs.
Genuine shadow pairs supply the identifying signal; self-pairs broaden the visual support.

\section{Experiments}
\label{sec:exp}

\subsection{Implementation Details}
\label{sec:exp_impl}
Following~\cite{gaoadaworld,jiang2026olaf}, the LAM has latent dimension $d_z{=}32$ and is trained on the \shadowlibrary at half the world model's spatial resolution ($H/2{\times}W/2$) with $\beta{=}0.01$; real videos enter as degenerate self-pairs.
The world model is built on the SkyReels-V2-1.3B I2V DiT backbone~\cite{chen2025skyreels}, fine-tuned with flow matching and converted to a block-causal generator.
All experiments are conducted on NVIDIA H200 GPUs.
Additional implementation details are provided in Supplementary Sec.~\ref{app:exp_details}.

\paragraph{Datasets}
The \shadowlibrary spans articulated human motion (SMPL-X sequences re-rendered in Blender across characters, scenes, lighting, and cameras, together with paired renders driven by~\cite{cao2025uni3c} that replay one motion under a resampled character and environment), robotic manipulation (ManiSkill~\cite{tao2024maniskill3}), first-person open-world games (GTA- and Cyberpunk-style urban scenes with first-person weapon fire and driving), third-person games (Unreal Engine scenes; Monster Hunter-style action), and camera trajectories, both around articulated subjects and through static scenes (DL3DV~\cite{ling2023dl3dv}); unpaired real video (OpenX robot corpora~\cite{oneill2024openx}, Internet clips~\cite{ju2024miradata}) joins as self-pairs.
Held-out shadow pairs from every source form the evaluation sets; the full composition of the \shadowlibrary is tabulated in Supplementary Sec.~\ref{app:exp_details}.

\begin{table*}[t]
\centering
\caption{\textbf{Quantitative comparison of action transfer across dynamics families.} Best in \textbf{bold}.}
\label{tab:transfer}
\resizebox{\linewidth}{!}{
\begin{tabular}{l|cc|cc|cc|cc|cc}
\toprule
& \multicolumn{2}{c|}{Human motion} & \multicolumn{2}{c|}{First-person combat} & \multicolumn{2}{c|}{Third-person action} & \multicolumn{2}{c|}{Camera control} & \multicolumn{2}{c}{Robot manipulation} \\
Method & PSNR$\uparrow$ & LPIPS$\downarrow$ & PSNR$\uparrow$ & LPIPS$\downarrow$ & PSNR$\uparrow$ & LPIPS$\downarrow$ & ATE$\downarrow$ & RPE$\downarrow$ & PSNR$\uparrow$ & LPIPS$\downarrow$ \\
\midrule
Olaf-World~\cite{jiang2026olaf} & 18.2 & 0.288 & 13.0 & 0.532 & 12.6 & 0.506 & 0.072 & 0.021 & 14.0 & 0.478 \\
\model (ours) & \textbf{22.4} & \textbf{0.184} & \textbf{17.0} & \textbf{0.354} & \textbf{17.4} & \textbf{0.309} & \textbf{0.005} & \textbf{0.003} & \textbf{22.6} & \textbf{0.116} \\
\bottomrule
\end{tabular}}
\end{table*}

\begin{figure*}[t]
    \centering
    \includegraphics[width=\linewidth]{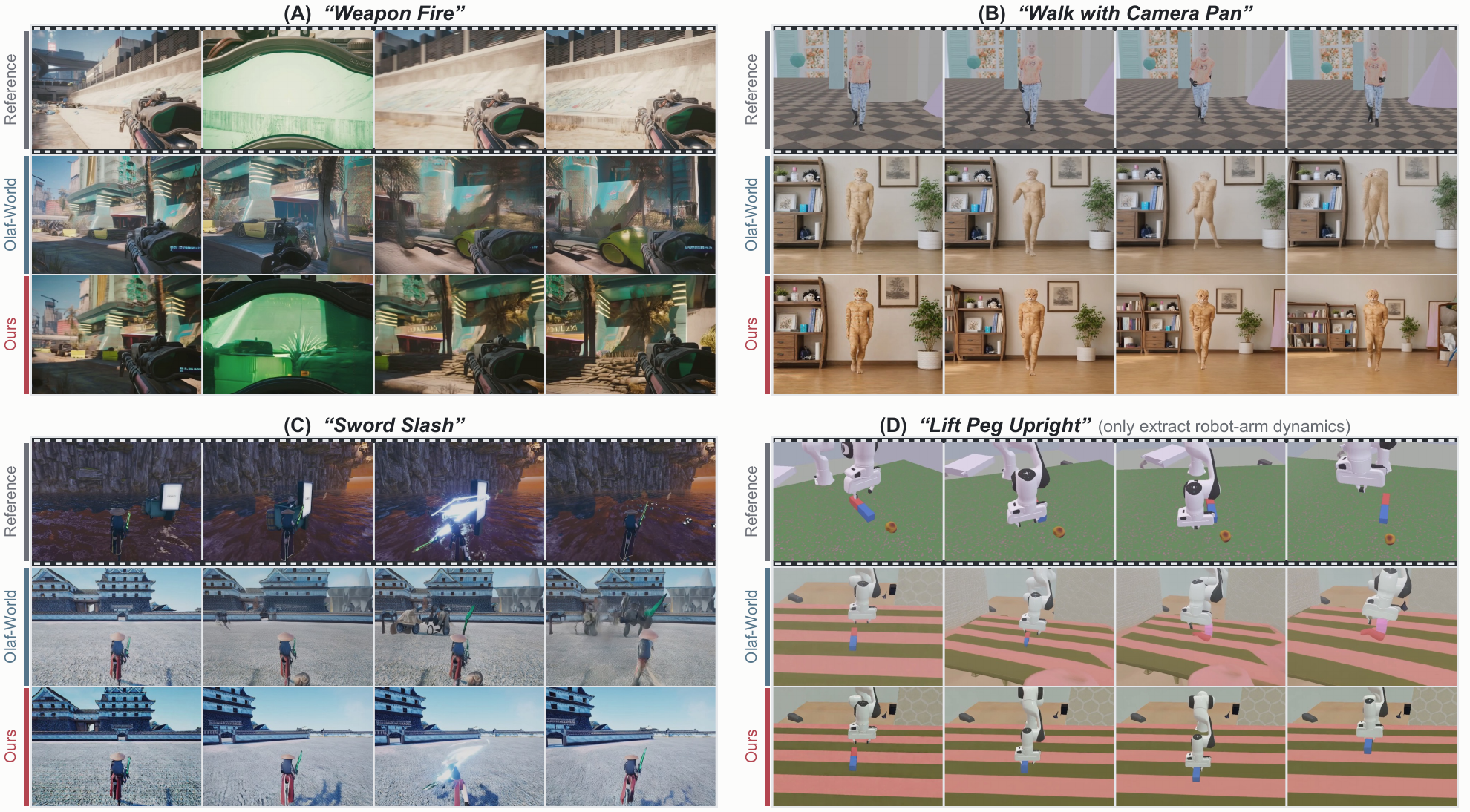}
    \caption{\textbf{Qualitative action transfer against Olaf-World.} Top: reference videos providing the action across four families; below: frame-synchronized generations in the new environment. Olaf-World warps subjects and leaks spurious motion; \model~re-enacts the reference dynamics faithfully.}
    \label{fig:olaf_vs_ours}
\end{figure*}

\subsection{Action Transfer Across Dynamics Families}
\label{sec:exp_transfer}
Our first experiment tests the central claim in one protocol: a single model with one latent interface re-enacts every family of dynamics in a new environment.
For every family in the \shadowlibrary (human motion, first-person combat, third-person action, robot manipulation, and camera control), we hold out shadow pairs, extract the action from video $x$, and generate from the first frame of its \shadow $\tilde{x}$, which realizes the identical, frame-synchronized dynamics.
Four families are scored by frame-aligned reconstruction against $\tilde{x}$ (PSNR, LPIPS~\cite{zhang2018unreasonable}); camera control is scored by trajectory error (ATE/RPE~\cite{sturm2012benchmark}) between the target trajectory and poses recovered from the generations with VGGT~\cite{wang2025vggt}.
The baseline is Olaf-World~\cite{jiang2026olaf}, the state-of-the-art self-reconstruction latent-action model: we match the training data and world-model backbone, inject its latent through the same routes, and retain its original single-head, $z$-only design, which has no source-asset stream.
Table~\ref{tab:transfer} reports the results: \model~leads on every family by a wide margin.
The gap is visible in Fig.~\ref{fig:olaf_vs_ours}: Olaf-World routinely produces warped, twisting subjects and spurious camera motion, because its entangled latent cannot cleanly extract the intended dynamics from the source, so what it transfers is the action mixed with whatever appearance happened to accompany it; with the shadow-trained latent, the same interface re-enacts the demonstrated dynamics faithfully in the new environment.
Latent-level probes corroborate this before any generation is involved: linear probes on $z$ show that cross-shadow training preserves action content while reducing leakage of the resampled scene (Supplementary Sec.~\ref{app:exp_details}).
The gain is not bought by blur: \model~roughly halves the Fr\'echet Video Distance of Olaf-World on three reconstruction families (Supplementary Sec.~\ref{app:exp_details}).

\begin{table}[t]
\centering
\caption{\textbf{Long action rollout.} Blinded 2AFC favor rate of \model~over each baseline; ${>}50\%$ favors ours.}
\label{tab:rollout}
\setlength{\tabcolsep}{4pt}
\renewcommand{\arraystretch}{1.15}
\resizebox{\linewidth}{!}{
\begin{tabular}{lccc}
\toprule
\model~(ours) over & Action Control & Action Fidelity & Long-horizon Consist. \\
\midrule
Olaf-World~\cite{jiang2026olaf} & $94\%$ & $95\%$ & $94\%$ \\
Yume-1.5~\cite{mao2025yume} & $88\%$ & $86\%$ & $91\%$ \\
LingBot-World~2.0~\cite{gao2026infinite} & $78\%$ & $83\%$ & $64\%$ \\
\bottomrule
\end{tabular}}
\end{table}
\begin{figure*}[t]
    \centering
    \includegraphics[width=\linewidth]{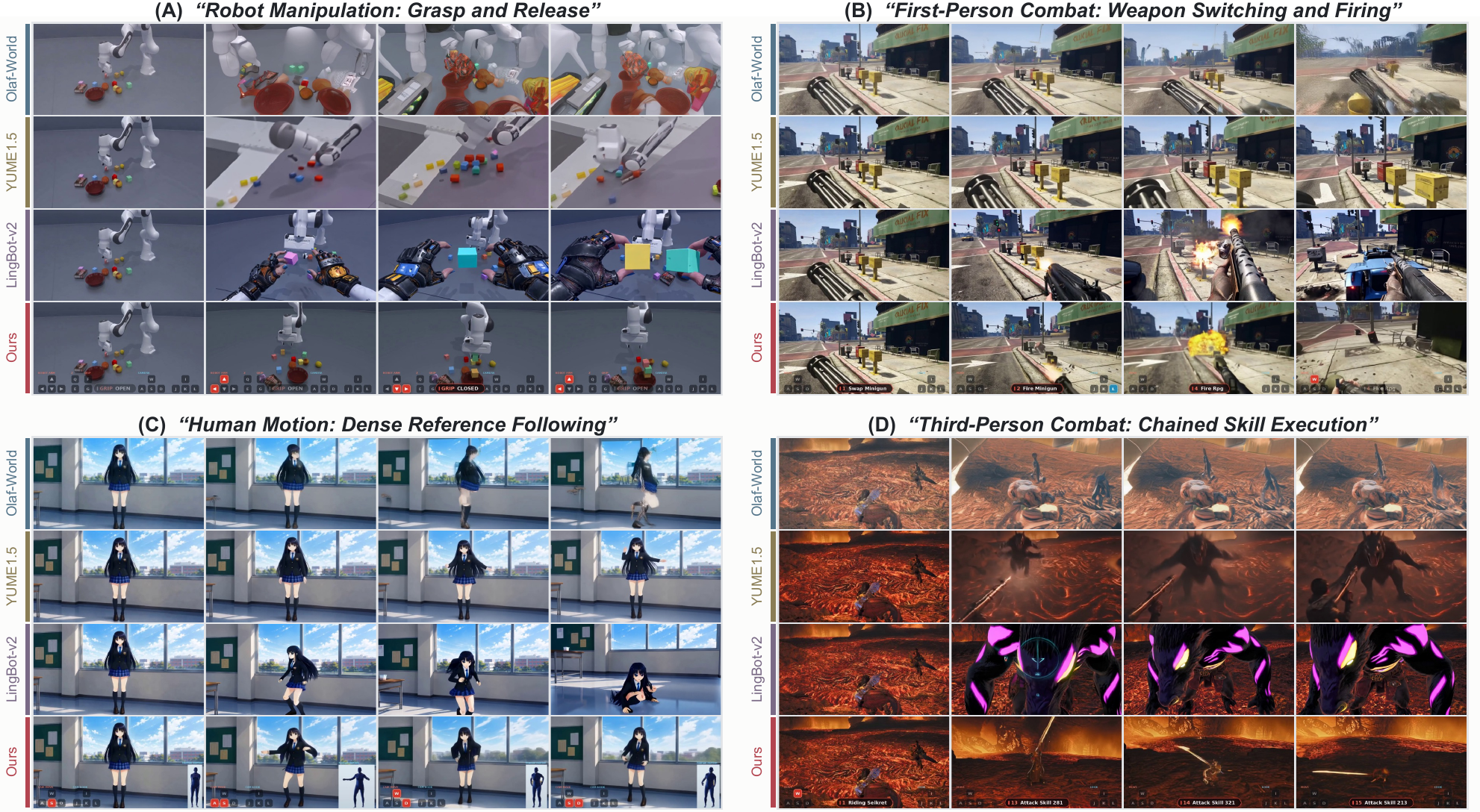}
    \caption{\textbf{Qualitative long action rollouts.} Each comparison starts from a shared first frame and follows the same command stream; rows are methods, columns are time steps along the rollout. Where the baselines drift or degrade, \model~stays aligned.}
    \label{fig:rollout_qual}
\end{figure*}

\subsection{Long Action Rollout}
\label{sec:exp_control}
Our second experiment evaluates whether learned action assets remain composable over long horizons.
Discrete commands are mapped to canonical action assets by lookup and streamed into long rollouts that chain several commands per video, spanning navigation, aiming and firing, and skill casts across first- and third-person worlds, together with human-motion and robot-manipulation streams.
Unlike the transfer setting, a chained command stream has no ground-truth realization, so control must be \emph{judged} rather than measured: following the preference protocol of~\cite{zhu2026incantation}, each \model~rollout is compared head-to-head against a baseline's in a blinded two-alternative forced choice (2AFC), judged by a VLM (Fable~5) over all pairwise comparisons.
The comparison runs along three action-centric axes: \emph{action control} (is the commanded action performed at all), \emph{action fidelity} (is the specific weapon/object/motion preserved), and \emph{long-horizon consistency} (no drift or degradation as the rollout extends); visual fidelity is excluded as orthogonal to control.
We compare against the state-of-the-art open interactive world models LingBot-World~2.0~\cite{gao2026infinite}, driven by a camera-pose trajectory with text-triggered events, and Yume-1.5~\cite{mao2025yume}, driven by text and keystrokes, alongside Olaf-World's latent interface on our backbone; each receives the same command stream through its native interface.
The comparison is deliberately system-level: the interfaces receive different information by design, and how faithfully a user's commands survive each interface is precisely what is being measured.
Table~\ref{tab:rollout} reports the favor rates, and Fig.~\ref{fig:rollout_qual} shows representative rollouts; where the baselines drift, hallucinate, or degrade, \model~stays aligned.

\begin{table}[t]
\centering
\caption{\textbf{Impact of key components.} From the $z$-only latent (Olaf-World), adding \emph{pairing}, then \emph{assets} ($=$\model). Row (a$'$) adds the assets to an \emph{unpaired} $z$, isolating the effect of pairing.}
\label{tab:ablation}
\resizebox{\linewidth}{!}{
\begin{tabular}{lccc|cc}
\toprule
Setting & $z$ & Paired $z$ & Assets & PSNR$\uparrow$ & LPIPS$\downarrow$ \\
\midrule
\textit{(a) Olaf-World ($z$ only)}      & \checkmark & $\times$   & $\times$   & 12.44 & 0.555 \\
\textit{(b) $+$ pairing}                & \checkmark & \checkmark & $\times$   & 14.75 & 0.448 \\
\textit{(c) assets, no action}          & $\times$   & --          & \checkmark & 15.07 & 0.420 \\
\textit{(a$'$) assets $+$ unpaired $z$} & \checkmark & $\times$   & \checkmark & 14.92 & 0.428 \\
\textit{(d) \model~(ours)}              & \checkmark & \checkmark & \checkmark & \textbf{16.35} & \textbf{0.376} \\
\bottomrule
\end{tabular}}
\end{table}

\begin{figure*}[t]
    \centering
    \includegraphics[width=\linewidth]{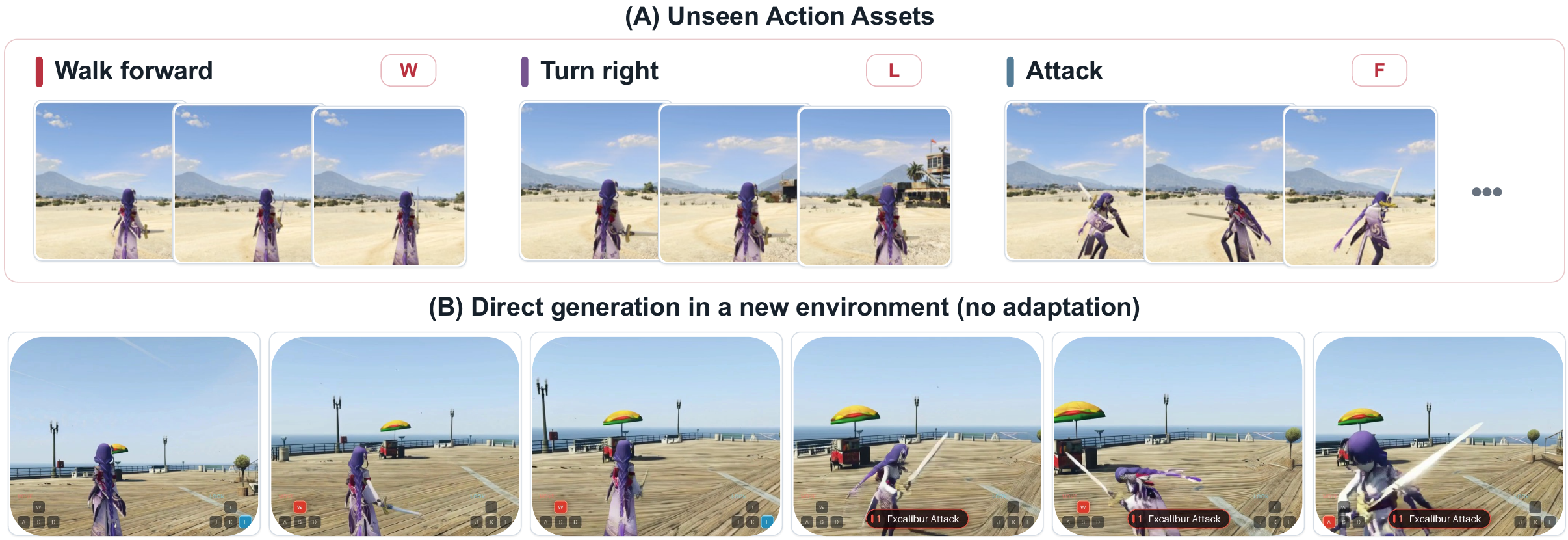}
    \caption{\textbf{Transferring unseen actions:} assets recorded from a modded, unseen character (A) drive generation in a new environment (B).}
    \label{fig:unseen_actions}
\end{figure*}

\subsection{Ablation Studies}
\label{sec:exp_ablation}
\paragraph{Effectiveness of shadow pairing and source assets}
\looseness=-1 As detailed in Table~\ref{tab:ablation}, we study the contribution of each component on the transfer split of Sec.~\ref{sec:exp_transfer}: \textit{(a)} the prior $z$-only latent interface, a separately trained model following the Olaf-World recipe~\cite{jiang2026olaf} on our data; \textit{(b)} the same interface with the LAM trained on shadow pairs instead of self-reconstruction; \textit{(c)} source assets without the action $z$; \textit{(a$'$)} the assets with an \emph{unpaired} $z$ from the Olaf-recipe LAM; and \textit{(d)} the full model; (b), (c), and (d) are inference-time ablations of one checkpoint, while (a) and (a$'$) are separately trained under the matched recipe.
Pairing is the foundation: once the latent is constrained to carry dynamics alone, the same interface becomes transferable ((a) vs.\ (b)).
Assets are a strong carrier on their own, since the source video already contains the action and its spatial detail, so (c) matches or surpasses the bottlenecked $z$.
The decisive comparison is (a$'$) vs.\ (d): they share both a latent and the assets and differ only in whether the latent was trained with shadow pairing, and the paired latent improves PSNR by $1.4$; an unpaired latent, by contrast, adds nothing over assets alone ((a$'$) vs.\ (c)).
The top-line gain is thus attributable to cross-shadow pairing, not to the assets alone or to conditioning on any latent.

\subsection{Transferring Unseen Actions}
\label{sec:exp_unseen}

A latent interface is only as general as its behavior on dynamics it has never seen, so we test transfer on actions introduced through game mods after training: an unseen player character and a two-handed sword whose attack is a swing rather than a shot; neither asset appears in the \shadowlibrary, nor does the motion either one produces.
We record walking, turning, and sword attacks from the modded character as action assets.
The source clips are recorded on one map and the extracted actions replayed on held-out maps, so transfer to a new environment is built into the protocol; Fig.~\ref{fig:unseen_actions} shows the results.
The modded dynamics survive this replay: this character's gait and this sword's swing, motion the model never observed, are reproduced on a held-out map, indicating that the encoder reads dynamics rather than memorized action categories.
This generalization is where the scaling potential of the paradigm lies: a behavior demonstrated after training joins the action library at the cost of one pass through the frozen encoders, so coverage grows with new demonstrations rather than with new training runs.

\section{Conclusion}
\model enables any-action, frame-level control of interactive video world models by demonstration.
Through shadow pairs and cross-shadow prediction, it learns a unified dynamics representation that discards, by construction, whatever the pairing resamples and keeps whatever it preserves, so users can extract the dynamics of any clip as a reusable action asset and replay it in new environments without labels, estimators, or fine-tuning.
Our experiments demonstrate clear improvements over latent-action and interactive world model baselines in both action transfer and long action rollout, across dynamics families ranging from human motion and camera control to gameplay and robot manipulation.
\model opens new possibilities for interactive world models, which can now be driven by showing rather than telling, from entertainment to simulation-based training of embodied agents.
We discuss limitations and future directions in Supplementary Sec.~\ref{app:limitations}.

{
    \small
    \bibliographystyle{ieeenat_fullname}
    \bibliography{main}
}

\clearpage
\setcounter{page}{1}
\maketitlesupplementary

\setcounter{section}{0}
\renewcommand{\thesection}{\Alph{section}}
\renewcommand{\theHsection}{supp.\Alph{section}}
\numberwithin{equation}{section}
\theoremstyle{plain}
\newtheorem{theorem}{Theorem}[section]
\newtheorem{lemma}[theorem]{Lemma}
\newtheorem{corollary}[theorem]{Corollary}
\newtheorem{assumption}{Assumption}
\renewcommand{\theassumption}{A\arabic{assumption}}
\theoremstyle{definition}
\newtheorem{definition}[theorem]{Definition}
\newtheorem{remark}[theorem]{Remark}

\section{Limitations and Future Directions}
\label{app:limitations}

\paragraph{Limitation}
\looseness=-1 \model depends on two preconditions.
First, action assets must be prepared in advance: at deployment the model consumes reference clips or stored assets, so a dynamics for which no demonstration exists must first be captured or authored.
Second, shadow pairs are easy to construct in games and simulators, which replay the same dynamics under resampled appearance on demand, but are much harder to collect in the real world, where such re-runs are rarely possible; real video therefore enters training only as self-pairs, contributing visual realism rather than the identifying signal.

\paragraph{Future directions}
\looseness=-1 Both preconditions point to generative remedies.
First, asset generation: because the encoder is appearance-invariant by construction, a demonstration does not have to be filmed.
Off-the-shelf video generation~\cite{GoogleDeepMindVeo,wan2025wan} is not sufficient by itself, however: a usable asset set consists of many clips of the same subject that stay strictly consistent in appearance and starting state and differ only in the performed action, whereas current models generate each video independently.
Designing an \emph{asset-generation} video model with this cross-clip consistency would let the action library scale with the generative ecosystem rather than with engine coverage.
Second, shadows for real video: since the pairing is a protocol rather than an engine, video editing models that change a clip's appearance while keeping its motion could manufacture shadows of real footage, extending the identifying supervision from synthetic worlds to the real domain.

\section{More Model Details}
\label{app:model_details}

\subsection{Factor-Selective Readout of the Unified Representation}
\label{app:factor_selective}

\looseness=-1 The main text (Sec.~\ref{sec:method_lam}) states that ego and scene dynamics are separated within one latent space; this section gives the mechanism.
The inverse-dynamics encoder follows the standard LAM architecture~\cite{gaoadaworld,jiang2026olaf}: a spatiotemporal transformer with causal temporal attention whose readout at frame $t{+}1$ infers $z_t$, so in implementation the posterior conditions on the full causal prefix, $q_\phi(z_t \mid x_{1:t+1})$, rather than the transition alone; clips of any length are encoded in a single pass.
We use \emph{three prompt slots}: three learnable tokens (\texttt{cam}, \texttt{dyn}, and \texttt{full}) are prepended to every frame's patch sequence, attend jointly through the shared encoder, and each owns a variational readout head.
The pairing protocol routes supervision among them: pairs preserving only the camera trajectory (scene and subject resampled or frozen) supervise the \texttt{cam} head; pairs preserving only scene motion (camera resampled) supervise the \texttt{dyn} head; pairs preserving both supervise the \texttt{full} head.
A per-sample mask $(m_{\mathrm{cam}}, m_{\mathrm{dyn}}) \in \{0,1\}^2$ selects one head, so each head is supervised only by the factor its pairs preserve.
What constitutes the \texttt{dyn} factor is not fixed a priori but defined by each family's pairing protocol: our robot manipulation pairs preserve the arm trajectory alone, so for this family \texttt{dyn} reads out arm dynamics and nothing else in the scene; our human pairs preserve the body motion.
This is the protocol nature of the \shadowlibrary: designating a factor as \texttt{dyn} requires only constructing pairs that preserve it.
The per-source control channel $(m_{\mathrm{cam}}, m_{\mathrm{dyn}})$ is listed in the composition table of Sec.~\ref{app:exp_details}.
All heads share one encoder and one latent space $\mathcal{Z}$, so downstream consumption is identical; yet at inference they are independent dials: the same reference video yields a pure camera action, a pure scene action, or their joint, and a body motion, a camera move, and an object interaction are simply different trajectories $z$.

\subsection{Conditioning Architecture}
\label{app:conditioning}

This section details the conditioning routes summarized in Sec.~\ref{sec:method_worldmodel} of the main text.
The backbone operates on latents of a causal 3D video VAE with temporal compression $r{=}4$; we therefore group the per-frame actions by latent frame, $\mathcal{Z}_f = ( z_{r(f-1)+1}, \dots, z_{rf} )$, following~\cite{yu2025gamefactory}.
The modulation route fuses the group-averaged action $\bar{z}_f$ into the per-latent-frame timestep embedding,
\begin{equation}
    e_f = \psi(\tau) + \alpha_z \, W_{\!z}\, \bar{z}_f,
    \label{eq:adaln}
\end{equation}
where $\psi(\tau)$ embeds the diffusion timestep and $e_f$ drives the adaptive layer norm of every DiT block, providing global, low-frequency control.
The cross-attention route preserves all $r$ action tokens: the spatial hidden states $h_f$ of latent frame $f$ attend to a per-frame context $\mathcal{C}_f$ through a dedicated arm,
\begin{equation}
    h_f \leftarrow h_f + W_{\!o}\,\mathrm{Attn}\big(h_f,\, \mathcal{C}_f\big),
    \label{eq:xattn}
\end{equation}
with $\mathcal{C}_f = [\,W_{\!a}\mathcal{Z}_f\,]$ initially holding the projected action tokens.
$W_{\!o}$ is zero-initialized, so the cross-attention arm is an exact no-op at step~0; the modulation gate $\alpha_z$ is instead initialized to a positive constant, since a zero-initialized gate received vanishing gradient and stalled in practice.

The bottleneck $d_z{=}32$ is a feature rather than a limitation: invariance demands that $z$ be too small to smuggle appearance, but it is then also too small to carry the high-frequency content of a motion, such as the precise articulation of limbs and the exact timing of contacts.
We therefore feed the source video itself to the generator as assets.
Let $s = \mathcal{E}_{\mathrm{vae}}(x^{\mathrm{src}})$ be the source's 3D-VAE latents.
On the input side, $s$ is channel-concatenated with the noisy latents $u$ and a binary availability mask $m$, then embedded by the input 3D convolution:
\begin{equation}
    h^{(0)} = \mathrm{Conv3D}\big(\,[\,u;\, s;\, m\,]\,\big);
    \label{eq:concat}
\end{equation}
on the context side, each latent frame $s_f$ is patchified into tokens that join the cross-attention context of Eq.~\eqref{eq:xattn},
\begin{equation}
    \mathcal{C}_f = \big[\, W_{\!a}\mathcal{Z}_f;\;\, W_{\!s}\,\mathrm{patchify}(s_f) \,\big],
    \label{eq:ctx}
\end{equation}
alongside a per-frame semantic embedding of the source in a third, parallel arm.

\section{Formal Guarantee for Shadow Learning}
\label{app:shadow_guarantee}

\looseness=-1 This section makes precise the sense in which a shadow pair makes an arbitrary
dynamics family learnable.  An unconditional statement is impossible: if two
dynamics produce exactly the same observations, no algorithm can distinguish
them; if a finite-dimensional code is too small for the intrinsic dynamics
dimension, no continuous encoder can store them.  We therefore state the
necessary observability, separation, and capacity conditions explicitly.  The
result is agnostic to the renderer and to whether the dynamics describe a
human, camera, robot, object, or an entire multi-action trajectory.

Our argument is related in spirit to results showing that paired views can
isolate invariant content from independently changing style
~\cite{vonkugelgen2021self} and can restore identifiability in multi-view
nonlinear models~\cite{gresele2020incomplete}.  Here we do not assume
independent latent components or attempt to recover a privileged coordinate
system.  We prove the task-specific statement needed by cross-shadow
prediction: the coarsest statistic sufficient for predicting one shadow from
the other is exactly the shared dynamics, up to an invertible
reparameterization.

\subsection{Setup and Assumptions}

\looseness=-1 Let $D$ denote the dynamics to be represented.  It may be one transition
$d_t$, a variable-length segment $d_{1:T}$, or a full rollout.  Let $X$ be a
source video, let $B$ collect the target-side information already available to
the decoder (for example, the previous target frame or clean block history),
and let $Y$ be the target next frame or block.  All variables take values in
standard Borel spaces, so the conditional distributions below exist.  Define
the target transition kernel
\begin{equation}
    K_d(b) := \mathbb{P}(Y\in\,\cdot\mid B=b,D=d).
    \label{eq:supp_kernel}
\end{equation}
Three explicit conditions capture the shadow construction.

\begin{assumption}[Independent context resampling]
\label{ass:shadow_independence}
The source is conditionally independent of the target rendering once the
shared dynamics is fixed:
\begin{equation}
    X \perp\!\!\!\perp (B,Y)\mid D.
    \label{eq:supp_shadow_ci}
\end{equation}
This is the probabilistic statement of ``replay the dynamics, resample
everything else.''  It allows both views to depend arbitrarily on $D$.
\end{assumption}

\begin{assumption}[Source observability]
\label{ass:shadow_observability}
There is a measurable map $h$ such that
\begin{equation}
    D=h(X) \qquad \text{almost surely}.
    \label{eq:supp_observable}
\end{equation}
Thus the requested dynamics is actually visible in the source clip.  This is
necessary for any deterministic encoder of one video to recover it.
\end{assumption}

\begin{assumption}[Target overlap and separation]
\label{ass:shadow_separation}
There is a probability measure $\nu$ on the target-context space such that
$\mathbb{P}(B\in\,\cdot\mid D=d)$ is equivalent to $\nu$ for almost every $d$;
that is, every dynamics is rendered over the same target contexts up to null
sets.  Moreover, distinct dynamics induce distinct transition kernels:
\begin{equation}
    K_d = K_{d'} \quad\Longrightarrow\quad d=d'.
    \label{eq:supp_separation}
\end{equation}
Here equality means equality $\nu$-almost everywhere.  We also assume that
$d\mapsto[K_d]_\nu$ is a measurable map into a standard Borel coding of these
kernels.  Equivalently, every pair $d\neq d'$ differs in its effect in a set of
target contexts of nonzero probability.  Separation is necessary.  The stated
mutual absolute continuity is a transparent sufficient coverage condition
that prevents the decoder-side context $B$ from perfectly revealing $D$.
\end{assumption}

\begin{definition}[Cross-shadow representation]
\label{def:shadow_sufficiency}
A representation $Z=s(X)$ is \emph{cross-shadow sufficient} when
\begin{equation}
    Y \perp\!\!\!\perp X\mid (B,Z),
    \label{eq:supp_sufficient}
\end{equation}
and is called \emph{minimal} when every other cross-shadow sufficient statistic
determines it.  Minimality does not impose a particular coordinate system; two
minimal statistics may differ by a bijection.
\end{definition}

\subsection{Identification Theorem}

\begin{theorem}[Shadow identification]
\label{thm:shadow_identification}
Under Assumptions~\ref{ass:shadow_independence}--\ref{ass:shadow_separation},
define the conditional-law statistic
\begin{equation}
    \Gamma(x):=[K_{h(x)}]_\nu.
    \label{eq:supp_gamma}
\end{equation}
Then: (i)~$\Gamma(X)$ is cross-shadow sufficient and minimal;
(ii)~it is invariant to every source-context factor resampled by the shadow
construction; and (iii)~every cross-shadow sufficient representation
$Z=s(X)$ determines $D$.  Consequently, every minimal sufficient $Z$ and $D$
determine one another and are equivalent up to a one-to-one
reparameterization.
\end{theorem}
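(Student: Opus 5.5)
The proof works directly with conditional laws. Since $D=h(X)$ almost surely, $D$ is a measurable function of $X$, so conditioning on $(B,X)$ is the same as conditioning on $(B,X,D)$.

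\textbf{Step 1.} We first show that $\mathbb{P}(Y\in\cdot\mid B,X)=K_D(B)$. Assumption~\ref{ass:shadow_independence} gives $X\perp\!\!\!\perp (B,Y)\mid D$. By the weak-union property this implies $Y\perp\!\!\!\perp X\mid (B,D)$. Hence
\[
\mathbb{P}(Y\in\cdot\mid B,X)=\mathbb{P}(Y\in\cdot\mid B,X,D)=\mathbb{P}(Y\in\cdot\mid B,D)=K_D(B)
\]
almost surely.

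\textbf{Step 2: part (i), sufficiency.} Since $\Gamma(X)=[K_D]_\nu$, the kernel $K_D(B)$ is a function of $(B,\Gamma(X))$. One caveat is that $K_D(b)$ is determined by $[K_D]_\nu$ only for $\nu$-a.e.\ $b$. The overlap condition in Assumption~\ref{ass:shadow_separation} closes this gap: $\mathbb{P}(B\in\cdot\mid D=d)\sim\nu$, so $\nu$-null sets are $\mathbb{P}$-null for $B$. We therefore choose a jointly measurable version and evaluate it at $B$. Because $\mathbb{P}(Y\in\cdot\mid B,X)$ is $\sigma(B,\Gamma(X))$-measurable, the tower property gives $\mathbb{P}(Y\in\cdot\mid B,\Gamma(X))=\mathbb{P}(Y\in\cdot\mid B,X)$. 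This is exactly Definition~\ref{def:shadow_sufficiency}.

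\textbf{Step 3: part (iii).} Let $Z=s(X)$ be sufficient. Then $\mathbb{P}(Y\in\cdot\mid B,Z)=\mathbb{P}(Y\in\cdot\mid B,X)=K_D(B)$ a.s., so $K_D(B)=g(B,Z)$ for some measurable $g$. We want to conclude that $Z$ determines $[K_D]_\nu$, and by separation therefore $D$.

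This is where I expect the main obstacle: the equality $K_D(B)=g(B,Z)$ holds only along the joint law of $(B,Z)$, not for all $b$. To lift it to $\nu$-a.e.\ equality of functions of $b$, I would use that $B\perp\!\!\!\perp X\mid D$ (again from Assumption~\ref{ass:shadow_independence}). This means $B$ given $(Z,D)$ has law $\mathbb{P}(B\in\cdot\mid D)\sim\nu$. Disintegrating, for a.e.\ $(z,d)$ we get $K_d(b)=g(b,z)$ for $\nu$-a.e.\ $b$. So if $s(x)=s(x')$, then $[K_{h(x)}]_\nu=[g(\cdot,z)]_\nu=[K_{h(x')}]_\nu$ off a null set, and separation forces $h(x)=h(x')$. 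Measurability of the resulting map $z\mapsto D$ should follow from the standard Borel assumptions, for example via a measurable-section or Doob–Dynkin argument, handled up to null sets.

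\textbf{Step 4: minimality and the remaining claims.} Step 3 shows that any sufficient $Z$ determines $D$, and hence determines $\Gamma(X)=[K_D]_\nu$. Since $\Gamma(X)$ is itself sufficient (Step 2), it is minimal. Separation says $d\mapsto[K_d]_\nu$ is injective, so $\Gamma(X)$ and $D$ determine each other. Given any minimal sufficient $Z$: it determines $D$ by Step 3, while minimality means the sufficient statistic $\Gamma(X)$ (hence $D$) determines $Z$. So $Z$ and $D$ are in one-to-one correspondence.

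\textbf{Part (ii).} Invariance is immediate because $\Gamma(X)$ depends on $X$ only through $D=h(X)$. Changing any resampled source-context factor with $D$ held fixed therefore leaves $\Gamma(X)$ unchanged.
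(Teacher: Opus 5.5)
Your proposal is correct and follows essentially the same route as the paper. It derives $\mathbb{P}(Y\in\cdot\mid B,X)=K_D(B)$ from conditional independence, then uses $B\perp\!\!\!\perp Z\mid D$ and the equivalence of $\mathbb{P}(B\in\cdot\mid D=d)$ to $\nu$ to upgrade $K_D(B)=L_Z(B)$ to $[K_D]_\nu=[L_Z]_\nu$, and finally uses separation to recover $D$ from $Z$, with minimality and the bijection following as you describe. For the measurability step you left open, the paper skips the pointwise ``$s(x)=s(x')$'' argument: it applies the measurable inverse of the Borel injection $d\mapsto[K_d]_\nu$, taken on its image, directly to $[L_Z]_\nu$, which gives $D=r(Z)$ as a measurable function in one step.
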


\begin{proof}
Because $D=h(X)$ and $X\perp\!\!\!\perp(B,Y)\mid D$, for almost every $x$ and
target context $b$,
\begin{align}
&\mathbb{P}(Y\in\,\cdot\mid B=b,X=x) \\
&\quad=\mathbb{P}(Y\in\,\cdot\mid B=b,D=h(x)) \\
&\quad=K_{h(x)}(b).
\label{eq:supp_key_identity}
\end{align}
The conditional law of $Y$ given $(B,X)$ therefore depends on $X$ only
through $\Gamma(X)$, which proves sufficiency.  The same identity shows
invariance: source videos with the same $D$ have the same $\Gamma$ regardless
of source context.

It remains to prove minimality without allowing the target-side variable $B$
to do the identification.  Let $Z=s(X)$ be any cross-shadow sufficient
statistic and write
$L_z(b)=\mathbb{P}(Y\in\,\cdot\mid B=b,Z=z)$.  Sufficiency and
Eq.~\eqref{eq:supp_key_identity} give
\begin{equation}
    K_D(B)=L_Z(B) \qquad \text{almost surely}.
    \label{eq:supp_kernel_match}
\end{equation}
Because $Z$ is a function of $X$, Assumption~\ref{ass:shadow_independence} implies
$B\perp\!\!\!\perp Z\mid D$.  Therefore, for almost every pair $(d,z)$ in the
support of $(D,Z)$, Eq.~\eqref{eq:supp_kernel_match} holds under
$\mathbb{P}(B\in\,\cdot\mid D=d)$.  Equivalence of this measure to $\nu$ in
Assumption~\ref{ass:shadow_separation} upgrades this to the kernel-class
identity
\begin{equation}
    [K_D]_\nu=[L_Z]_\nu \qquad \text{almost surely}.
    \label{eq:supp_kernel_class}
\end{equation}
The Borel injection $d\mapsto[K_d]_\nu$ has a measurable inverse on its image,
so
\begin{equation}
    D=(d\mapsto[K_d]_\nu)^{-1}([L_Z]_\nu)=:r(Z)
    \qquad \text{almost surely}.
    \label{eq:supp_recover_d}
\end{equation}
Consequently, $\Gamma(X)=[K_{r(Z)}]_\nu$ is a measurable function of every
sufficient $Z$, proving minimality.  Conversely, the same inverse recovers
$D$ from $\Gamma(X)$.  Any other minimal sufficient $Z$ is also a function of
$\Gamma$, so $Z$ and $D$ are bijective up to null sets.
\end{proof}

\begin{corollary}[Cross-shadow likelihood learns a dynamics representation]
\label{cor:shadow_logloss}
Assume the relevant conditional laws admit densities with respect to a common
dominating measure and have finite population log loss.  Let $Z=s(X)$ and
allow the decoder to range over all conditional
distributions $q(Y\mid B,Z)$.  Under
Assumptions~\ref{ass:shadow_independence}--\ref{ass:shadow_separation}, $Z$ attains the same
population negative log-likelihood as a decoder given the full source $X$ if
and only if $Z$ is cross-shadow sufficient.  Therefore every Bayes-optimal
code contains $D$, and every minimal Bayes-optimal code is a one-to-one
reparameterization of $D$.
\end{corollary}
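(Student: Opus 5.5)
The corollary follows from the standard decomposition of population log loss into a conditional-entropy term plus an expected KL divergence. We then apply Theorem~\ref{thm:shadow_identification}. Fix a representation $Z=s(X)$. For any decoder $q(Y\mid B,Z)$ we write
\begin{equation*}
\mathbb{E}\big[-\log q(Y\mid B,Z)\big]=H(Y\mid B,Z)+\mathbb{E}_{B,Z}\,\mathrm{KL}\big(p(Y\mid B,Z)\,\|\,q(Y\mid B,Z)\big),
\end{equation*}
where $H$ is the conditional (differential) entropy relative to the common dominating measure. Finiteness of the population log loss makes all terms well defined. Hence the best decoder on $Z$ is the true conditional law, and its loss is $H(Y\mid B,Z)$. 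The same decomposition, applied to a decoder that sees all of $X$, gives optimal loss $H(Y\mid B,X)$. So the first claim reduces to showing that $H(Y\mid B,Z)=H(Y\mid B,X)$ holds if and only if $Y\perp\!\!\!\perp X\mid(B,Z)$.

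For this step we use that $Z$ is a function of $X$. This gives the chain rule
\begin{equation*}
H(Y\mid B,Z)-H(Y\mid B,X)=I(Y;X\mid B,Z)=\mathbb{E}\,\mathrm{KL}\big(p(Y\mid B,X)\,\|\,p(Y\mid B,Z)\big)\ge 0.
\end{equation*}
The mutual information is zero exactly when $p(Y\mid B,X)=p(Y\mid B,Z)$ almost surely, which is the definition of cross-shadow sufficiency in Eq.~\eqref{eq:supp_sufficient}. The main technical obstacle is the differential-entropy setting, where entropies may be negative or infinite and so cannot be subtracted naively. To avoid this, we will not take differences of entropies. Instead we compare the two losses directly: we write loss$(Z)-$loss$(X)$ as the expectation of $\log p(Y\mid B,X)/p(Y\mid B,Z)$. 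This difference is integrable because both losses are finite, and it equals the expected KL divergence above. The argument therefore never invokes $H$ on its own.

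Bayes-optimal codes are those attaining the infimum of the loss over all $s$. Since every code has loss at least that of $X$, and $Z=X$ attains it, the Bayes-optimal codes are exactly the cross-shadow sufficient ones. Theorem~\ref{thm:shadow_identification}(iii) then gives $D=r(Z)$, so every Bayes-optimal code contains $D$. A minimal Bayes-optimal code is a minimal sufficient statistic. By part (i) and the concluding statement of the theorem, it is in bijection with $D$ up to null sets, which is the claimed one-to-one reparameterization.
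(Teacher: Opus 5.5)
Your proposal is correct and follows the paper's proof. Like the paper, you identify the optimal decoder on $Z$ as the true conditional law, write the excess risk over the full-source decoder as $\mathbb{E}\,\mathrm{KL}(\mathbb{P}(Y\mid B,X)\,\|\,\mathbb{P}(Y\mid B,Z))\geq 0$, note that it vanishes exactly under cross-shadow sufficiency, and then invoke Theorem~\ref{thm:shadow_identification}. Your extra care in comparing the two losses directly rather than subtracting differential entropies, and in showing that Bayes-optimal codes coincide with the sufficient ones, only makes explicit details the paper leaves implicit.
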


\begin{proof}
For a fixed $Z$, the optimal decoder is the true conditional law
$\mathbb{P}(Y\mid B,Z)$.  Since $Z$ is a function of $X$, the gap from the
full-source Bayes risk is
\begin{align}
\mathcal R^*(Z)-\mathcal R^*(X)
&=\mathbb E\!\left[
\mathrm{KL}\!\left(
\mathbb P(Y\mid B,X)\,\|\,\mathbb P(Y\mid B,Z)
\right)\right] \\
&\geq 0.
\label{eq:supp_log_gap}
\end{align}
Equality holds exactly when
$Y\perp\!\!\!\perp X\mid(B,Z)$.  The conclusion then follows from
Theorem~\ref{thm:shadow_identification}.
\end{proof}

The theorem identifies a representation, not a preferred axis system: if
$\varphi$ is any bijection, $\varphi(D)$ is equally valid.  This is the
strongest identifiability one can request of an unlabeled latent.  A
nonminimal Bayes-optimal encoder may additionally store source appearance, but
that information is provably unnecessary for cross-shadow prediction and is
removed by passing to the minimal sufficient statistic.  In our model, the
finite variational bottleneck is the mechanism that favors this minimal
solution; the theorem does not claim that a KL penalty or a particular SGD run
uniquely guarantees it.

\begin{remark}[Deterministic representation versus variational channel]
Theorems~\ref{thm:shadow_identification} and
Corollary~\ref{cor:shadow_logloss} concern a deterministic statistic $Z=s(X)$,
such as encoder parameters or a posterior mean.  The sampled Gaussian channel
$q_\phi(z_t\mid x_{1:t+1})$ and the $\beta$-weighted KL term in our implementation form
a rate--distortion approximation to this ideal.  They favor a compact code but
do not, by themselves, guarantee exact sufficiency or minimality at finite
rate.
\end{remark}

\subsection{Realization and Learning with Differentiable Networks}

The previous theorem is nonparametric.  We next show that, under standard
regularity conditions, its representation and predictor can be realized by
differentiable neural networks.

\begin{theorem}[Differentiable neural representation]
\label{thm:shadow_neural_realizability}
Assume Assumptions~\ref{ass:shadow_independence}--\ref{ass:shadow_separation}.
In addition, suppose that the supports of $X$, $B$, and
$D$ are compact subsets of finite-dimensional Euclidean spaces; $h$ is
continuous; $D$ admits a continuous embedding $e:\mathcal{D}\to\mathbb{R}^m$
into the chosen latent dimension; and the target rendering is
\begin{equation}
    Y=F(B,D)
    \label{eq:supp_transition_model}
\end{equation}
for a continuous $F$.  Then there exist sequences of differentiable neural
networks $(E_n,G_n)$ such that
\begin{equation}
\sup_x\|E_n(x)-e(h(x))\|_2\longrightarrow 0
\label{eq:supp_encoder_convergence}
\end{equation}
and
\begin{equation}
\sup_{b,x}\|G_n(b,E_n(x))-F(b,h(x))\|_2\longrightarrow 0.
\label{eq:supp_predictor_convergence}
\end{equation}
Thus a differentiable neural encoder can represent $D$ up to an invertible
coordinate change while discarding source context, and its decoder can attain
arbitrarily small cross-shadow reconstruction error.
\end{theorem}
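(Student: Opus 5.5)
The plan is to reduce both claims to the universal approximation theorem for continuous functions on compact sets, applied twice: once for the encoder, and once for the decoder on a slightly enlarged compact domain. Throughout, fix a smooth activation (e.g.\ $\tanh$ or softplus), so every network below is differentiable.

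\emph{Step 1: the encoder.} The target map $g:=e\circ h$ is continuous on the compact support $\mathcal{X}$ of $X$, since $h$ and $e$ are continuous. Universal approximation then gives networks $E_n$ with $\varepsilon_n:=\sup_x\|E_n(x)-g(x)\|_2\to 0$, which is Eq.~\eqref{eq:supp_encoder_convergence}. Because $e$ is an embedding, it is a homeomorphism onto its image, and $e^{-1}$ is continuous on the compact set $e(\mathcal{D})$. Hence $E_n$ encodes $D$ up to the invertible coordinate change $e$. Invariance to source context is inherited from $g$, which factors through $h(X)=D$, exactly as in Theorem~\ref{thm:shadow_identification}(ii).

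\emph{Step 2: the decoder.} Define $\Phi(b,u):=F\bigl(b,e^{-1}(u)\bigr)$ on $\mathcal{B}\times e(\mathcal{D})$. This map is continuous but defined only on the image of $e$, while the decoder must be evaluated at $E_n(x)$, which may lie slightly off $e(\mathcal{D})$. This is the main obstacle. I would resolve it with the Tietze extension theorem: extend $\Phi$ componentwise to a continuous $\bar\Phi$ on $\mathcal{B}\times N$, where $N$ is a compact neighborhood of $e(\mathcal{D})$ (e.g.\ its closed $1$-neighborhood). For $n$ large we have $\varepsilon_n<1$, so $E_n(\mathcal{X})\subset N$.

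Now $\bar\Phi$ is uniformly continuous on the compact set $\mathcal{B}\times N$; let $\omega$ denote a modulus of continuity. Universal approximation gives networks $G_n$ with $\delta_n:=\sup_{\mathcal{B}\times N}\|G_n-\bar\Phi\|_2\to0$.

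\emph{Step 3: the triangle inequality.} For all $b$ and $x$,
\[
\|G_n(b,E_n(x))-F(b,h(x))\|_2\le \delta_n+\bigl\|\bar\Phi(b,E_n(x))-\bar\Phi(b,g(x))\bigr\|_2\le \delta_n+\omega(\varepsilon_n)\to 0 .
\]
Here we used $\bar\Phi(b,g(x))=F(b,h(x))$. This gives Eq.~\eqref{eq:supp_predictor_convergence}.

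\emph{Step 4: the reconstruction error.} Since $Y=F(B,D)$ with $D=h(X)$, the cross-shadow reconstruction error $\mathbb{E}\|G_n(B,E_n(X))-Y\|_2^2$ is bounded by the square of the uniform bound in Step 3, and therefore tends to $0$. The only delicate point is the domain mismatch handled in Step 2; the rest is a standard compactness argument.
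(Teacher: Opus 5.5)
Your proof is correct and follows the same route as the paper's: approximate $e\circ h$ uniformly, continuously extend $F(b,e^{-1}(z))$ from $\mathcal{B}\times e(\mathcal{D})$ to a compact neighborhood, approximate that extension, and conclude by uniform continuity. Your Tietze extension, your use of the closed $1$-neighborhood so that $E_n(\mathcal{X})\subset N$ once $\varepsilon_n<1$, and your $\delta_n+\omega(\varepsilon_n)$ bound spell out the step that the paper compresses into ``uniform continuity of the extension turns both approximations into'' the two limits.
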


\begin{proof}
Set $E_0=e\circ h$.  Since $e$ is a continuous injection from a compact space
into a Hausdorff space, its inverse is continuous on $e(\mathcal D)$.  Hence
\begin{equation}
    G_0(b,z)=F\bigl(b,e^{-1}(z)\bigr),
    \qquad z\in e(\mathcal D),
\end{equation}
is continuous and satisfies
$G_0(B,E_0(X))=F(B,D)=Y$.  Extend $G_0$ continuously from
the compact set $\mathcal B\times e(\mathcal D)$ to a compact Euclidean
neighborhood.  Universal approximation with a smooth, bounded, nonconstant
sigmoidal activation (such as logistic or $\tanh$)
then gives sequences of differentiable networks that approximate $E_0$ and
this extension of $G_0$ uniformly~\cite{hornik1991approximation}.
Uniform continuity of the extension of $G_0$ turns both approximations
into Eqs.~\eqref{eq:supp_encoder_convergence} and \eqref{eq:supp_predictor_convergence}.
\end{proof}

\begin{corollary}[Exact reconstruction identifies the representation]
\label{cor:shadow_mse}
In the deterministic setting of
Theorem~\ref{thm:shadow_neural_realizability}, suppose a code $Z=s(X)$ and
decoder $G$ attain
\begin{equation}
    \mathbb E\|Y-G(B,Z)\|_2^2=0.
    \label{eq:supp_zero_mse}
\end{equation}
Then $Z$ determines $D$ almost surely.  If $Z$ is minimal among exact
reconstruction codes, it is a one-to-one reparameterization of $D$.
\end{corollary}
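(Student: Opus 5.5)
The plan is to reduce the statement to Theorem~\ref{thm:shadow_identification}(iii). The only thing to establish is that a code attaining zero mean-squared reconstruction error is cross-shadow sufficient in the sense of Definition~\ref{def:shadow_sufficiency}.

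First, Eq.~\eqref{eq:supp_zero_mse} gives $Y=G(B,Z)$ almost surely. In the deterministic setting of Theorem~\ref{thm:shadow_neural_realizability} we also have $Y=F(B,D)$. So conditionally on $(B,Z)$, the variable $Y$ is almost surely equal to the measurable function $G(B,Z)$, and its regular conditional law given $(B,Z,X)$ is the point mass $\delta_{G(B,Z)}$. Since $Z=s(X)$, the pair $(B,Z)$ is a function of $(B,X)$. Hence $\mathbb{P}(Y\in\cdot\mid B,X)=\delta_{G(B,s(X))}$ almost surely, and this depends on $X$ only through $Z$. This is exactly $Y\perp\!\!\!\perp X\mid(B,Z)$, so $Z$ is cross-shadow sufficient. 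Theorem~\ref{thm:shadow_identification}(iii) then gives a measurable $r$ with $D=r(Z)$ almost surely.

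For the second claim, take a code $Z$ that is minimal among exact reconstruction codes. The code $E_0(X)=e(h(X))$ with decoder $G_0$ from the proof of Theorem~\ref{thm:shadow_neural_realizability} is itself an exact reconstruction code, since $G_0(B,E_0(X))=F(B,D)=Y$. Minimality says $Z$ is a function of $E_0(X)$, hence a function of $D$ because $e$ is injective. Combined with $D=r(Z)$, the maps $Z\mapsto D$ and $D\mapsto Z$ are mutually inverse up to null sets, which gives the one-to-one reparameterization.

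The main obstacle is the measure-theoretic bookkeeping in the sufficiency step. We must check that the degenerate conditional law is a valid regular version, which holds because the spaces are standard Borel. We also need the separation hypothesis (Assumption~\ref{ass:shadow_separation}) to be compatible with a deterministic kernel $K_d(b)=\delta_{F(b,d)}$. It is: separation then simply means that $F(\cdot,d)$ and $F(\cdot,d')$ differ on a $\nu$-positive set whenever $d\neq d'$, so Theorem~\ref{thm:shadow_identification} applies without modification. "Minimal among exact reconstruction codes" should be read in the same determination order as Definition~\ref{def:shadow_sufficiency}, so that exhibiting $E_0$ is enough to close the argument.
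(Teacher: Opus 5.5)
Your proof is correct, and the first half is exactly the paper's argument. Zero mean-squared error forces $Y=G(B,Z)$ almost surely, so $Y\perp\!\!\!\perp X\mid(B,Z)$, and Theorem~\ref{thm:shadow_identification}(iii) then gives $D=r(Z)$.

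For the second claim you take a different route from the paper.

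\emph{The paper's route.} It proves the converse inclusion: in the deterministic setting, every cross-shadow sufficient code admits the exact decoder $\mathbb E[Y\mid B,Z]=Y$. Hence the class of exact-reconstruction codes equals the class of sufficient codes, the two notions of minimality coincide, and the final clause of Theorem~\ref{thm:shadow_identification} applies verbatim.

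\emph{Your route.} You exhibit a single exact-reconstruction code that is a function of $D$, namely $e(h(X))$ with decoder $G_0$. Minimality against it gives $Z=\varphi(e(D))$, and you close with $D=r(Z)$.

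\emph{Comparison.} Your route is more direct. It needs one witness and no converse inclusion. The witness could even be simplified to $h(X)=D$ with decoder $F$. Injectivity of $e$ plays no role in concluding that $Z$ is a function of $D$, since $Z=\varphi(e(D))$ holds regardless. The paper's route buys a stronger structural fact: ``minimal among exact reconstruction codes'' and ``minimal sufficient'' are the same notion here. As a result, every conclusion of the theorem, including equivalence with $\Gamma(X)$, transfers to exact-reconstruction codes at once.
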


\begin{proof}
Equation~\eqref{eq:supp_zero_mse} implies $Y=G(B,Z)$ almost surely, hence
$Y\perp\!\!\!\perp X\mid(B,Z)$.  The claim follows from
Theorem~\ref{thm:shadow_identification}.  Conversely, in this deterministic
setting any sufficient $Z$ admits the exact decoder
$G(B,Z)=\mathbb E[Y\mid B,Z]=Y$, so minimal exact-reconstruction codes and
minimal sufficient codes coincide.
\end{proof}

Theorem~\ref{thm:shadow_neural_realizability} is a realizability statement.
For stochastic renderers, the same construction applies if the rendering
randomness is included in $B$ and available to the decoder; alternatively one
may approximate the full
conditional kernel with a neural density decoder and use
Corollary~\ref{cor:shadow_logloss}.  There is also a standard statistical
learning counterpart: for i.i.d. shadow pairs, bounded second moments,
globally optimized empirical squared loss, and a neural-network sieve whose
capacity grows at a controlled rate, neural nonparametric regression is
risk-consistent~\cite{white1990connectionist}.  This consistency result learns
the composite Bayes predictor; it does not by itself imply convergence of the
internal encoder.  The representation conclusion follows separately, at a
minimal population optimum, from
Theorem~\ref{thm:shadow_identification} and
Corollaries~\ref{cor:shadow_logloss} and~\ref{cor:shadow_mse}.  It does not assert finite-sample recovery,
a particular convergence rate, convergence of every encoder parameterization,
or that arbitrary nonconvex optimization finds a global solution.

\subsection{Scope of the ``Any Dynamics'' Claim}

The identification result places no semantic restriction on $D$ and no
component-independence assumption on the renderer.  For heterogeneous action
families, the guarantee applies family by family: let $F$ index the pairing
protocol (the family), and let Assumptions~\ref{ass:shadow_independence}--%
\ref{ass:shadow_separation} hold conditionally on $F$ with a family-specific
context measure $\nu_F$.  The overlap condition of
Assumption~\ref{ass:shadow_separation} is then required only \emph{within} a
family, never across families; a human context need never support a robot
dynamics.  Since $F$ is itself observable from the source (which family a clip
shows is visible in the clip), conditioning on $F$ loses nothing, and the
theorem identifies $D$ within every family.  What is shared \emph{across}
families---one encoder and one latent space $\mathcal{Z}$---is an
architectural choice rather than a consequence of the theorem: it is what
makes the per-family identified representations unified in practice, as the
cross-family transfer results of the main text support empirically.
Therefore, any video dynamics for which one can (i) replay the
dynamics while independently resampling the other generative factors,
(ii) observe the requested dynamics in the source, and (iii) render
overlapping contexts, within its family, in which distinct dynamics have
distinct effects admits a context-invariant cross-shadow representation.  Under the additional
finite-dimensionality, compactness, continuity, and latent-capacity conditions
of Theorem~\ref{thm:shadow_neural_realizability}, that representation and its
decoder admit differentiable neural realizations to arbitrary accuracy.

If separation fails, define
\begin{equation}
    d\sim d' \quad\Longleftrightarrow\quad K_d=K_{d'}.
\end{equation}
Provided this equivalence class is itself observable from $X$, the same proof
identifies $[D]$---exactly the part of the dynamics that can affect a shadow---
and no predictive method can do better.  If source observability fails even
for $[D]$, a deterministic representation of one source video is impossible;
in general only the conditional-law statistic
$x\mapsto\mathbb P(Y\in\,\cdot\mid B=\cdot,X=x)$ is identified (it may be
representable as a mixture over equivalence classes).  These
boundaries prevent ``any dynamics'' from being read as a promise to recover
information that no video contains.

Finally, the guarantee applies to exact shadows satisfying
Assumption~\ref{ass:shadow_independence}.  The
degenerate self-pairs used to import unpaired real video are an empirical
addition and do not, by themselves, provide the identifiability guarantee.

\section{Additional Experimental Details and Ablations}
\label{app:exp_details}

\subsection{Composition of the \shadowlibrary}
Table~\ref{tab:data} summarizes the sources of the \shadowlibrary: how each source is paired, which control channel it supervises via the per-sample mask $(m_{\mathrm{cam}}, m_{\mathrm{dyn}})$ of Sec.~\ref{app:factor_selective}, and its approximate scale and share of the sampling mixture.

\begin{table*}[t]
\centering
\caption{\textbf{Composition of the \shadowlibrary.} Shadow pairs are two frame-synchronized renders of one dynamics trajectory with the remaining generative factors resampled; unpaired corpora enter as self-pairs. $(\text{cam},\text{dyn})$ is the control channel each source supervises. Counts are approximate.}
\label{tab:data}
\resizebox{\linewidth}{!}{
\begin{tabular}{llccccc}
\toprule
Category & Data source & Pairing & (cam,\,dyn) & \#Seqs & \#Frames & Mixture \\
\midrule
\multirow{4}{*}{Human motion}
 & SMPL-X re-render, body dynamics only         & shadow pair & (0,\,1) & ${\sim}1$k   & ${\sim}1.1$M & 13.6\% \\
 & SMPL-X re-render, camera only                & shadow pair & (1,\,0) & ${\sim}1.1$k & ${\sim}1.2$M & 10.9\% \\
 & SMPL-X re-render, camera $+$ body            & shadow pair & (1,\,1) & ${\sim}1$k   & ${\sim}1.2$M & 8.2\% \\
 & Paired character renders~\cite{cao2025uni3c} & shadow pair & (1,\,1) & ${\sim}2$k   & ${\sim}0.3$M & 8.8\% \\
\midrule
\multirow{3}{*}{Robot manipulation}
 & ManiSkill~\cite{tao2024maniskill3}, arm only       & shadow pair & (0,\,1) & ${\sim}1.1$k & ${\sim}1$M   & 2.7\% \\
 & ManiSkill~\cite{tao2024maniskill3}, camera only    & shadow pair & (1,\,0) & ${\sim}1.2$k & ${\sim}1.1$M & 6.8\% \\
 & ManiSkill~\cite{tao2024maniskill3}, camera $+$ arm & shadow pair & (1,\,1) & ${\sim}1.2$k & ${\sim}1.1$M & 5.4\% \\
\midrule
\multirow{2}{*}{First-person games}
 & GTA-style urban driving and weapon fire & shadow pair & (1,\,1) & ${\sim}0.9$k & ${\sim}0.4$M & 8.8\% \\
 & Cyberpunk-style night city              & shadow pair & (1,\,1) & ${\sim}3.4$k & ${\sim}1.6$M & 8.2\% \\
\midrule
\multirow{2}{*}{Third-person games}
 & Unreal Engine scenes (melee, spellcasting) & shadow pair & (1,\,1) & ${\sim}30$ scenes & ${\sim}0.6$M & -- \\
 & Monster Hunter-style action                & shadow pair & (1,\,1) & ${\sim}1$k   & ${\sim}0.5$M & -- \\
\midrule
Static-scene camera
 & DL3DV~\cite{ling2023dl3dv} & self-pair & (1,\,0) & ${\sim}1$k scenes & ${\sim}0.3$M & 6.8\% \\
\midrule
\multirow{7}{*}{Unpaired real video}
 & MiraData Internet clips~\cite{ju2024miradata} & self-pair & (1,\,1) & ${\sim}5$k   & ${\sim}6.5$M & 5.4\% \\
 & OpenX BridgeData~\cite{oneill2024openx}       & self-pair & (0,\,1) & ${\sim}29$k  & ${\sim}0.8$M & 0.1\% \\
 & OpenX FurnitureBench~\cite{oneill2024openx}   & self-pair & (0,\,1) & ${\sim}5$k   & ${\sim}2.2$M & 4.1\% \\
 & OpenX Berkeley UR5~\cite{oneill2024openx}     & self-pair & (0,\,1) & ${\sim}1$k   & ${\sim}0.1$M & 2.7\% \\
 & OpenX Jaco Play~\cite{oneill2024openx}        & self-pair & (0,\,1) & ${\sim}1.1$k & ${\sim}0.1$M & 2.7\% \\
 & OpenX Stanford HYDRA~\cite{oneill2024openx}   & self-pair & (0,\,1) & ${\sim}0.6$k & ${\sim}0.3$M & 2.7\% \\
 & OpenX RoboTurk~\cite{oneill2024openx}         & self-pair & (0,\,1) & ${\sim}2$k   & ${\sim}0.1$M & 2.0\% \\
\bottomrule
\end{tabular}}
\end{table*}

\subsection{Training and Inference Details}
\looseness=-1 The LAM is trained with $\beta{=}0.01$ and a fixed prior $\mathcal{N}(0, I)$, always at half the world model's spatial resolution; real videos enter the stream as degenerate self-pairs.
The self-pair probability is $0.5$ for the human body-dynamics source and $0$ for the other paired sources; since the unpaired real-video and static-scene camera sources are always self-paired, roughly one third of training samples are self-pairs.
The world model is first fine-tuned bidirectionally with flow matching, then converted to a block-causal generator with 3 latent frames per block and rolled out with a key--value cache at inference.
The action-transfer comparisons (Table~\ref{tab:transfer}) use the model at $480{\times}720$ (LAM at $240{\times}360$); the long-rollout model (Table~\ref{tab:rollout}) generates at $544{\times}960$ (LAM at $272{\times}480$), warm-started from a lower-resolution checkpoint and fine-tuned at the target resolution for 10k steps on $8{\times}$H200 GPUs with fully-sharded data parallelism.
At inference we sample with 30 flow-matching denoising steps and classifier-free guidance 5.0.

\subsection{Evaluation Details}
In Table~\ref{tab:transfer}, the human-motion, first-person, and third-person families are scored on 81-frame clips from the first frame, averaging 45--50 held-out pairs each; robot manipulation uses a smaller set of held-out ManiSkill pairs, and the camera split uses held-out camera-trajectory pairs.
Since monocular pose recovery is scale-ambiguous, the VGGT-recovered trajectory is aligned to the target with a similarity transform (Sim(3)) before computing ATE/RPE.
The ablation of Table~\ref{tab:ablation} uses a 12-pair subset of the transfer split under the same protocol, covering the first-person combat and third-person action families; the reported numbers average the two.
\paragraph{Long-rollout judging protocol}
The comparison of Table~\ref{tab:rollout} uses 16 long command streams.
For each stream, \model~and the baseline generate from the same first frame under the same commands; both rollouts are cut at the same boundaries into 240-frame segments, and corresponding segments are compared, giving roughly 64 comparison pairs per baseline.
Each pair is anonymized as A and B with randomized assignment and judged by a VLM (Fable~5) in a forced choice on each of the three axes, with no tie option; a human audit reviews a random $20\%$ of the judgments to validate the VLM's decisions.
The judge receives both segments, the command list, and an instruction of the following form:
\begin{quote}\small
You are judging two video rollouts, A and B, generated by two different systems from the same first frame under the same sequence of action commands: \texttt{<commands>}.
Judge the actions, not the image quality.
Answer three independent questions; for each you must answer exactly ``A'' or ``B'', even if the difference is small.
(1)~\emph{Action control}: in which video are the commanded actions actually performed, at the commanded times?
(2)~\emph{Action fidelity}: which video better preserves the specific weapon, object, and motion of each commanded action?
(3)~\emph{Long-horizon consistency}: which video stays coherent for longer, without drifting away from the commanded behavior, freezing, or degrading as the rollout proceeds?
Ignore differences in visual style, sharpness, or aesthetics, except where degradation makes the action itself unreadable.
Output exactly three lines: \texttt{control: A|B}, \texttt{fidelity: A|B}, \texttt{consistency: A|B}.
\end{quote}

\subsection{Representation-Level Probes}
Before touching the world model, we probe the latent $z$ itself on held-out shadow pairs (Table~\ref{tab:ablation_probe}).
First, the \emph{cross-pair transfer ratio}: with the LAM's own decoder, we reconstruct the target from its visual context using either the target's own action ($z(\tilde{x})$, ``self'') or the shadow source's ($z(x)$, ``cross''); the ratio $\mathrm{MSE}_{\text{cross}}/\mathrm{MSE}_{\text{self}}$ measures how much a cross-appearance $z$ degrades reconstruction.
Second, two linear probes on pooled $z$ over held-out UE clips: decoding the \emph{character} and the \emph{scene}.
In this evaluation set each character is given its own distinct action set, so character identity is readable from motion alone and the character probe measures action content (higher is better), while the scene probe measures pure appearance (lower is better).
Since characters differ in appearance as well as in motion, character accuracy alone could in principle reflect either; the baseline contrast below rules out the appearance route: the self-reconstruction latent leaks \emph{more} appearance by the scene probe yet decodes character only at chance, so character accuracy tracks the motion set rather than looks.
The cross-shadow latent transfers almost losslessly on first-person pairs (ratio $1.02$ vs.\ $1.27$) and dominates both probes: it decodes character at $11\times$ chance while leaking less scene identity, whereas the self-reconstruction latent is at chance on content yet leaks \emph{more} appearance --- regularization does not substitute for pairing.
Note that the cross/self ratio is informative only for a content-bearing latent: the baseline decodes content at chance, so its near-unity third-person ratio reflects an uninformative $z$, for which self and cross reconstructions are equally unguided, rather than successful transfer.

\begin{table}[t]
\centering

\caption{\textbf{LAM representation probes} on held-out shadow pairs (24 pairs) and UE clips (60 clips; character chance $=0.04$, scene chance $=0.20$). The Olaf-recipe row follows~\cite{jiang2026olaf}: self-reconstruction with feature alignment and a single head.}
\label{tab:ablation_probe}
\resizebox{\linewidth}{!}{
\begin{tabular}{l|cc|cc}
\toprule
& \multicolumn{2}{c|}{Cross/self MSE ratio$\downarrow$} & \multicolumn{2}{c}{Linear probe on $z$} \\
LAM training & 1st-person & 3rd-person & Char.\ (content)$\uparrow$ & Scene (appear.)$\downarrow$ \\
\midrule
Self-recon.+reg., single head & 1.265 & 1.067 & 0.050 & 0.650 \\
Cross-shadow, 3-prompt (ours) & \textbf{1.017} & 1.103 & \textbf{0.450} & \textbf{0.433} \\
\bottomrule
\end{tabular}}
\end{table}

\subsection{Distribution-Level Quality}
Reconstruction metrics could in principle be won by a conservative model that stays close to the reference at the cost of realism.
We rule this out with Fr\'echet Video Distance (FVD) on the same generations as Table~\ref{tab:transfer}, computed over 16-frame windows (stride 8) on the three large reconstruction families (Table~\ref{tab:fvd}); the camera family is scored by trajectory error rather than reconstruction, and the robot split is too small for a stable estimate.
\model~roughly halves the FVD of Olaf-World on every family, so its reconstruction advantage comes with better, not worse, distribution-level realism.

\begin{table}[t]
\centering
\caption{\textbf{Distribution-level quality (FVD$\downarrow$)} on the same generations as Table~\ref{tab:transfer}.}
\label{tab:fvd}
\begin{tabular}{lcc}
\toprule
Family & Olaf-World & \model \\
\midrule
First-person combat & 555.8 & \textbf{318.3} \\
Third-person action & 842.0 & \textbf{422.2} \\
Human motion        & 511.6 & \textbf{253.9} \\
\bottomrule
\end{tabular}
\end{table}

\subsection{What the Action Latent Carries}
The conditioning consumes both the latent $z$ and the source assets $s$; this probe tests, at inference, which one carries the action.
For each family we take two shadow pairs $A$ and $B$ from different sources and, from the base input $(z_A, s_A)$, form four interventions: replacing the latent ($z_B, s_A$), shuffling the latent along time, replacing the assets ($z_A, s_B$), and degrading the assets to a quarter resolution ($z_A, s_A^{\downarrow4}$).
We report PSNR against target $A$ (Table~\ref{tab:mismatch}).
On four of five families, replacing or shuffling $z$ drops PSNR sharply while degrading $s$ barely moves it: the commanded action lives in the latent, and the assets supply appearance and high-frequency detail.\footnote{On the body-motion family the source render already depicts the motion in full, so the assets alone can clone it and the probe cannot separate the two channels; the character-render source shares this property and is likewise not probed. This is a limit of the probe on exact-replay families, consistent with the assets being a strong carrier ((c) in Table~\ref{tab:ablation}).}

\begin{table}[t]
\centering
\caption{\textbf{What the action latent carries.} PSNR against target $A$ under each intervention, 3--4 couples per family; $\Delta$ relative to \emph{base}.}
\label{tab:mismatch}
\resizebox{\linewidth}{!}{
\begin{tabular}{lccccc}
\toprule
Family & base & $z{\leftarrow}B$ & shuffle $z$ & $s{\leftarrow}B$ & degrade $s$ \\
\midrule
First-person combat   & 14.77 & $-2.59$ & $-1.57$ & $-2.48$ & $-0.36$ \\
Third-person action   & 16.55 & $-3.01$ & $-0.33$ & $-2.15$ & $-0.41$ \\
Camera                & 21.73 & $-1.92$ & $-2.09$ & $-3.28$ & $-0.13$ \\
Robot arm             & 23.71 & $-0.98$ & $-1.95$ & $-0.15$ & $+0.20$ \\
Body motion           & 22.25 & $-0.53$ & $+0.19$ & $-0.79$ & $-0.37$ \\
\bottomrule
\end{tabular}}
\end{table}

\subsection{Architecture Ablations}
\label{sec:suppl_ablation}

\paragraph{Latent size and self-pair ratio}
The choice $d_z{=}32$ is inherited from prior latent-action models~\cite{gaoadaworld,jiang2026olaf}, where this size was validated as the best operating point for self-reconstruction; since real video enters our training stream as self-pairs under the same reconstruction objective (Sec.~\ref{sec:method_library}), their operating point carries over, and it is consistent with the invariance argument of Sec.~\ref{app:conditioning} that the bottleneck should stay too small to smuggle appearance.
A budget-matched sweep confirmed this choice: the representation probes were insensitive to $d_z\in\{16,64\}$ and to forcing the self-pair ratio, while reconstruction favored $d_z\le 32$, so we retain $d_z{=}32$.

\end{document}